\newcommand{\Input}{\mathcal{X}}
\newcommand{\Output}{\mathcal{Y}}
\newcommand{\obs}{\mathbf x}
\newcommand{\Ind}[1]{\mathbbm{1}_{#1}}
\newcommand{\Dist}{\mathcal{D}}
\newcommand{\TrnSet}{\mathcal S}
\newcommand{\TstSet}{\mathcal T}
\newcommand{\mRb}{\texttt{DS}}
\newcommand{\OVA}{\texttt{OVA}}
\newcommand{\RecallTree}{\texttt{RecallTree}}
\newcommand{\fastxml}{\texttt{FastXML}}
\newcommand{\sleec}{\texttt{SLEEC}}
\newcommand{\leml}{\texttt{LEML}}
\newcommand{\pfastrexml}{\texttt{PfastReXML}}
\newcommand{\pdsparse}{\texttt{PD-Sparse}}
\newcommand{\MSVM}{\texttt{M-SVM}}
\newcommand{\bfZ}{\boldsymbol{z}}
\newcommand{\MaF}{\texttt{MaF}$_1$}
\newcommand{\covers}{{\cal K}}
\newcommand{\proba}{\mathbb{P}}
\newcommand{\nb}[2]{\fbox{\bfseries\sffamily\scriptsize#1}{\sf\small$\blacktriangleright$\textit{#2}$\blacktriangleleft$}}}
\newcommand{\nb}[2]{}}
\newcommand{\graph}{\mathcal G}
\newcommand{\graphH}{\mathcal H}
\newcommand{\vertices}{\mathcal V}
\newcommand{\edges}{\mathcal E}
\newcommand{\Cset}{\mathcal C}
\newcommand{\Weight}{W}
\newcommand{\cover}{\mathcal C}
\newcommand{\Xset}{\mathcal X}
\newcommand{\Fset}{\mathcal H}
\newcommand{\realset}{\mathbb{R}}
\newcommand{\expectation}{\mathbb{E}}
\newcommand{\rademacher}{\mathfrak{R}}
\newcommand{\EmpLosss}[1]{\hat{R}_{#1}}
\newtheorem{definition}{Definition}
\newtheorem{theorem}{Theorem}
\newtheorem{lemma}{Lemma}
\newcommand{\prob}{\mathbb{P}}
\newcommand{\var}{\mathbb{V}}
\newcommand{\cY}{\mathcal{Y}}
\newcommand{\EmpRisk}[1]{{\tilde{R}}_{#1}}
\DeclareMathOperator*{\argmax}{argmax}
\newlength{\dhatheight}
\title{Aggressive Sampling for Multi-class to Binary Reduction with Applications to Text Classification}
\author{
  Bikash Joshi\\
  Laboratoire d'Informatics de Grenoble\\
  University of Grenoble Alpes\\ 
  Grenoble, France \\
  \texttt{firstname.lastname@imag.fr}\\
    \And
  Massih-Reza Amini\\
  Laboratoire d'Informatics de Grenoble\\
  University of Grenoble Alpes\\ 
  Grenoble, France\\
  \texttt{firstname.lastname@imag.fr}\\
    \And 
   Ioannis Partalas\\
   Expedia LPS \\ 
   Geneva, Switzerland\\
   \texttt{ipartalas@expedia.com}\\
   \And 
    Franck Iutzeler\\
    Laboratoire Jean Kuntzmann (LJK) \\
    University of Grenoble Alpes \\
    Grenoble, France \\
    \texttt{firstname.lastname@imag.fr}\\
    \And
  Yury Maximov\\
  Theoretical Division T-4 and CNLS\\
  Los Alamos National Laboratory\\
  Los Alamos, NM 87545 and \\
  Center for Energy Systems\\
  Skolkovo Institute of Science and Technology,\\
  Moscow, Russia\\
  \texttt{yury@lanl.gov} 
  \vspace{-6mm}
}
\begin{document}

\maketitle

\begin{abstract}
  We address the problem of multi-class classification in the case where the number of classes is very large. We propose a double sampling strategy on top of a multi-class to binary reduction strategy, which transforms the original multi-class problem into a binary classification problem over pairs of examples. The aim of the sampling strategy is to overcome the curse of long-tailed class distributions exhibited in majority  of  large-scale  multi-class classification problems and to reduce the number of pairs of examples in the expanded data.  We show that this strategy does not alter the consistency of the empirical risk minimization principle defined over the double sample reduction. Experiments are carried out on DMOZ and Wikipedia collections with 10,000 to 100,000 classes where we show the efficiency of the proposed approach in terms of training and prediction time, memory consumption, and predictive performance with respect to state-of-the-art approaches.
\end{abstract}

\section{Introduction}

Large-scale multi-class or extreme classification involves problems with extremely large number of classes as it appears in text repositories such as Wikipedia, Yahoo! Directory (\url{www.dir.yahoo.com}), or Directory Mozilla DMOZ (\url{www.dmoz.org}); and it  has recently evolved as a popular branch of machine learning with many applications in tagging, recommendation and  ranking.

The most common and popular baseline in this case is the one-versus-all approach (\OVA) ~\cite{Lorena08} where one independent binary classifier is learned per class. Despite its simplicity, this approach suffers from two main limitations; first, it becomes computationally intractable when the number of classes grow large, affecting at the same time the prediction. Second, it suffers from the class imbalance problem by construction.



Recently, two main approaches have been studied to cope with these limitations. The first one, broadly divided in  tree-based and embedding-based methods, have been proposed with the aim of reducing the effective space of labels in order to control the complexity of the learning problem.
Tree-based methods \cite{Beygelzimer09,bengio2010label,choromanska2013extreme,Choromanska14,daume2016logarithmic,prabhu2014fastxml,bhatia2015sparse, jasinska2016log} rely on binary tree structures where each leaf corresponds to a class and inference is performed by traversing the tree from top to bottom; a binary classifier being used at each node to determine the child node to develop. These methods have logarithmic time complexity with the drawback that it is a challenging task to find a balanced tree structure which can partition the class labels. Further, even though different heuristics have been developed to address the unbalanced problem, these methods suffer from the drawback that they have to make several decisions prior to reaching a final category, which leads to error propagation and thus a decrease in accuracy. On the other hand, label embedding approaches \cite{NIPS2009_3824, bhatia2015sparse, Mineiro15}  first project the label-matrix into a low-dimensional linear subspace and then use an \OVA\ classifier. However, the low-rank assumption of the label-matrix is generally transgressed in the extreme multi-class classification setting, and these methods generally lead to high prediction error.


The second type of approaches aim at reducing the original multi-class problem into a binary one by first expanding the original training set using a projection of pairs of observations and classes into a low dimensional dyadic space, and then learning a single classifier to separate between pairs constituted with examples and their true classes and pairs constituted with  examples with other classes  \cite{Abe:2004, Weston11, JoshiAPRUG15}. 
Although prediction in the new representation space is relatively fast, the construction of the dyadic training observations is generally time consuming and prevails over  the training and prediction times.

\noindent\textbf{Contributions.} In this paper, we propose a scalable multi-class classification method based on an aggressive double sampling of the dyadic output prediction problem. Instead of computing all possible dyadic examples, our proposed approach consists first in drawing a new training set of much smaller size from the original one by oversampling the most small size classes and by sub-sampling  the  few  big  size classes in order to avoid the curse of long-tailed class distributions common in the majority of large-scale multi-class classification problems \cite{BabbarSIGKDD14}. The second goal is to reduce the number of constructed dyadic examples. Our reduction strategy brings inter-dependency between the pairs containing the same observation and its true class in the original training set. Thus, we derive new generalization bounds  using  local  fractional Rademacher complexity showing that even with a shift in the original class distribution and also the inter-dependency between the pairs of example, the empirical risk minimization principle over the transformation of the sampled training set remains consistent. We validate our approach by conducting a series of experiments on subsets of DMOZ and the Wikipedia collections with up to 100,000 target categories.


\section{A doubly-sampled multi-class to binary reduction strategy}
\label{sec:Model}

We address the problem of monolabel multi-class classification  defined on joint space $\Input\times\Output$ where $\Input\subseteq \mathbb{R}^d$ is the {\em input space} and $\Output=\{1,\ldots,K\}\doteq[K]$ the {\em output space}, made of $K$ classes. Elements
of $\Input\times\Output$ are denoted as $\obs^{y}=(x,y)$.  Furthermore, we assume the training set $\TrnSet=(\obs_i^{y_i})_{i=1}^m$ is made of $m$ i.i.d examples/class pairs distributed according to a fixed but unknown probability distribution $\Dist$, and we consider a class of predictor functions $\mathcal{G}=\{g: \Input \times \Output \rightarrow \mathbb{R} \}$.

We define the instantaneous loss for predictor $g\in\mathcal{G}$ on example $\obs^{y}$ as:
\begin{equation}
\label{eq:InstLoss}
e(g,\obs^{y})=\frac{1}{K-1}\sum_{y'\in\Output\backslash\{y\}}\Ind{g(\obs^{y})\leq g(\obs^{y'})},
\end{equation}
where $\Ind{\pi}$ is the indicator function equal to $1$ if the predicate $\pi$ is true and $0$ otherwise. Compared to the classical multi-class error, $e'(g,\obs^{y})=\Ind{y\neq \argmax_{y'\in\Output} g(\obs^{y'})}$, 
the loss of~\eqref{eq:InstLoss}  estimates the average number of classes, given any input data, that get a greater scoring  by $g$ than the correct class. The loss (\ref{eq:InstLoss}) is hence a \emph{ranking} criterion, and the multi-class \texttt{SVM} of~\cite{Weston98} and AdaBoost.MR~\cite{schapire99} optimize convex surrogate functions of this loss. It is also used in label ranking~\cite{hullermeier2007minimizing}. 
Our objective is to find a function $g\in\mathcal{G}$ with a small expected risk
\begin{equation}
\label{eq:GenLossMC}
R(g)=\mathbb{E}_{\obs^{y}\sim \Dist}\left[e(g,\obs^{y})\right],
\end{equation}
by minimizing the empirical error defined as the average number of training examples $\obs_i^{y_i}\in\TrnSet$ which, in mean, are scored lower than $\obs_i^{y'}$, for $y'\in \Output\backslash\{y_i\}$ :
\begin{equation}
\label{eq:EmpLossMC}
    \EmpRisk{m}(g,\TrnSet) = 
    \frac{1}{m}\sum_{i=1}^m e(g,\obs_i^{y_i})=\frac{1}{m(K-1)}\sum_{i=1}^m 
    \sum_{y'\in\Output\backslash\{y_i\}}\Ind{g(\obs_i^{y_i}) - g(\obs_i^{y'}) \leq 0} .
\end{equation}
\subsection{Binary reduction based on dyadic representations of examples and classes}
\label{sec:dyadic}

In this work, we consider prediction functions of the form $g = f \circ \phi$, where $\phi: \Input\times\Output \to \mathbb{R}^p $ is a projection of the input and the output space into a joint feature space of dimension $p$; and $f \in \mathcal{F} = \{ f :\mathbb{R}^p\to \mathbb{R}\}$ is a function  that measures the adequacy between an observation $\obs$ and a class $y$ using their corresponding representation $\phi(\obs^y)$. The projection function $\phi$ is application-dependent and it can either be learned \cite{Weston11}, or defined using some heuristics \cite{Volkovs:12, JoshiAPRUG15}.

Further, consider the following dyadic transformation 
\begin{equation}
\label{eq:transfo}
T(\TrnSet)=\left( \left\{ \begin{array}{ll} \left(\bfZ_{j}=\left(\phi(\obs_i^{k\vphantom{k+1}}),\phi(\obs_i^{y_i})\right), ~~~ \tilde{y}_{j} = -1 \right) & \text{if } k<y_i \\\left(\bfZ_{j}=\left(\phi(\obs_i^{y_i}),\phi(\obs_i^{k})\right), ~~~  \tilde{y}_{j} = +1 \right) & \text{elsewhere}     \end{array}\right.\right)_{j\doteq(i-1)(K-1)+k},
\end{equation}
where $j=(i-1)(K-1)+k$ with $i\in [m], k\in [K-1]$; that expands a $K$-class labeled set $\TrnSet$ of size $m$ into a binary labeled set $T(\TrnSet)$ of size $N=m(K-1)$ (e.g.  Figure \ref{fig:Transfo} over a toy problem). With the class of functions
\begin{equation}
\label{eq:ClassOfFunction}
\mathcal H=\{h: \mathbb{R}^p\times\mathbb{R}^p \rightarrow \mathbb{R};  (\phi(\obs^y),\phi(\obs^{y'})) \mapsto  f(\phi(\obs^y)) - f (\phi(\obs^{y'}))  , f\in\mathcal{F}\},
\end{equation}
the empirical loss (Eq.~\eqref{eq:EmpLossMC})  can be rewritten as~:
\begin{equation}
\label{eq:EmpLossMC2}
\EmpRisk{T(\TrnSet)}(h) =\frac{1}{N}\sum_{j=1}^N \Ind{\tilde{y}_j h(\bfZ_j)\leq 0}.
\end{equation}
Hence, the minimization of Eq.~\eqref{eq:EmpLossMC2} over the transformation $T(\TrnSet)$ of a training set $\TrnSet$
\begin{wrapfigure}[11]{r}{0.5\textwidth}
\vspace{-2mm}
\resizebox{0.5\textwidth}{!}{
    \begin{tikzpicture}

\node[] at (1.5,2.5) {$\mathcal{S}$};  

\draw[rounded corners=5pt]   (2,2) rectangle ++(4,1);
  
\node[] at (2.5,2.5) {$\mathbf{x}_1^{y_1}$};  
\node[] at (3.5,2.5) {$\mathbf{x}_2^{y_2}$};  
\node[] at (4.5,2.5) {$\mathbf{x}_3^{y_3}$};  
\node[] at (5.5,2.5) {$\mathbf{x}_4^{y_4}$};

\draw[thick,->] (4,2) -- (4,0.9);

\node[] at (4.5,1.5) {$T$};

\draw[rounded corners=5pt]   (-2,0.9) rectangle ++(12,-2.3);

\node[] at (0,0.5) {\footnotesize $(\mathbf{z}_1 = (\phi( \mathbf{x}_1^{y_1}),\phi(\mathbf{x}_1^{y_2})) , +1)$};  
\node[] at (4,0.5) {\footnotesize $(\mathbf{z}_2 = (\phi( \mathbf{x}_1^{y_1}),\phi(\mathbf{x}_1^{y_3})) , +1)$};  
\node[] at (8,0.5) {\footnotesize $(\mathbf{z}_3 = (\phi( \mathbf{x}_1^{y_1}),\phi(\mathbf{x}_1^{y_4})) , +1)$};

\node[] at (0,0.0) {\footnotesize $(\mathbf{z}_4 = (\phi( \mathbf{x}_2^{y_1}),\phi(\mathbf{x}_2^{y_2})) , -1)$};  
\node[] at (4,0.0) {\footnotesize $(\mathbf{z}_5 = (\phi( \mathbf{x}_2^{y_2}),\phi(\mathbf{x}_2^{y_3})) , +1)$};  
\node[] at (8,0.0) {\footnotesize $(\mathbf{z}_6 = (\phi( \mathbf{x}_2^{y_2}),\phi(\mathbf{x}_2^{y_4})) , +1)$};

\node[] at (0,-0.5) {\footnotesize $(\mathbf{z}_7 = (\phi( \mathbf{x}_3^{y_1}),\phi(\mathbf{x}_3^{y_3})) , -1)$};  
\node[] at (4,-0.5) {\footnotesize $(\mathbf{z}_8 = (\phi( \mathbf{x}_3^{y_2}),\phi(\mathbf{x}_3^{y_3})) , -1)$};  
\node[] at (8,-0.5) {\footnotesize $(\mathbf{z}_9 = (\phi( \mathbf{x}_3^{y_3}),\phi(\mathbf{x}_3^{y_4})) , +1)$};

\node[] at (0,-1.0) {\footnotesize $(\mathbf{z}_{10} = (\phi( \mathbf{x}_4^{y_1}),\phi(\mathbf{x}_4^{y_4})) , -1)$};  
\node[] at (4,-1.0) {\footnotesize $(\mathbf{z}_{11} = (\phi( \mathbf{x}_4^{y_2}),\phi(\mathbf{x}_4^{y_4})) , -1)$};  
\node[] at (8,-1.0) {\footnotesize $(\mathbf{z}_{12} = (\phi( \mathbf{x}_4^{y_3}),\phi(\mathbf{x}_4^{y_4})) , -1)$};  

\end{tikzpicture}
}
\caption{A toy example depicting the transformation $T$ (Eq. \eqref{eq:transfo}) applied to a training set $\TrnSet$ of size $m=4$ and $K=4$.}
\label{fig:Transfo}
\end{wrapfigure}
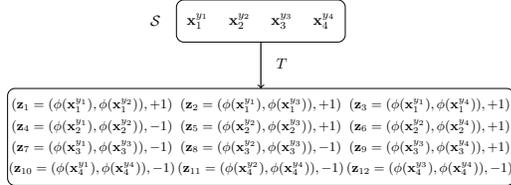
defines a binary classification over the pairs of dyadic examples. However, this binary problem takes as examples dependent random variables, as for each original example $\obs^y\in\TrnSet$, the $K-1$ pairs in $\{(\phi(\obs^y),\phi(\obs^{y'})); \tilde y\}\in T(\TrnSet)$ all depend on $\obs^y$.  In \cite{JoshiAPRUG15} this problem is studied by bounding the  generalization error associated to \eqref{eq:EmpLossMC2} using the fractional Rademacher complexity proposed in~\cite{UsunierAG05}. 
In this work, we derive a new generalization bounds based on Local Rademacher Complexities introduced in \cite{RalaivolaAmini15} that implies second-order (i.e. variance) information inducing faster convergence rates (Theorem \ref{thm:MulticlassRedBinaryGenBound}). Our analysis relies on the notion of graph covering introduced in \cite{Janson04RSA} and defined as~:
\begin{definition}[Exact proper fractional cover of $\graph$, \cite{Janson04RSA}]
\label{def:chromatic}
Let $\graph=(\vertices,\edges)$ be a
graph. $\cover=\{(\Cset_k,\omega_k)\}_{k\in[J]}$, for some positive integer $J$, with
$\Cset_k\subseteq\vertices$ and $\omega_k\in [0,1]$ is an exact proper
fractional cover of $\graph$, if:
i) it is {\em proper:} $\forall k,$ $\Cset_k$ is an {\em independent set}, i.e., there is no connections between vertices in~$\Cset_k$;
ii) it is an {\em exact fractional cover} of $G$: $\forall
  v\in\vertices,\;\sum_{k:v\in\Cset_k}\omega_k= 1$.
\end{definition}
The weight $\Weight(\cover)$ of $\cover$ is given by: $\Weight(\cover)\doteq\sum_{k\in[J]}\omega_k$ and the
minimum weight $\chi^*(\graph)=\min_{\cover\in\covers(\graph)} \Weight(\cover)$ over the set $\covers(\graph)$ of all exact proper fractional covers of $\graph$ is the {\em fractional chromatic number} of $\graph$.

From this statement, \cite{Janson04RSA} extended Hoeffding's inequality and proposed large deviation bounds for sums of dependent random variables which was the precursor of new generalisation bounds, including a  Talagrand’s type inequality for empirical processes in the dependent case presented in \cite{RalaivolaAmini15}.

With the classes of functions $\mathcal{G}$ and  ${\cal H}$ introduced previously, consider the parameterized family $\Fset_{r}$ which, for $r>0$, is defined as:
\[
    \Fset_{r} = 
    \{h:h\in\Fset,\mathbb{V}[h]\doteq\mathbb{V}_{\bfZ,\tilde{y}}[\Ind{\tilde{y}h(\bfZ)}]\leq r\},
\]
where $\mathbb{V}$ denotes the variance. 

\begin{wrapfigure}[15]{r}{0.5\textwidth}
\begin{flushleft}
    \vspace{-6mm}
    \hspace{-5mm}
    \qquad\includegraphics[width=70mm,height=40mm,left]{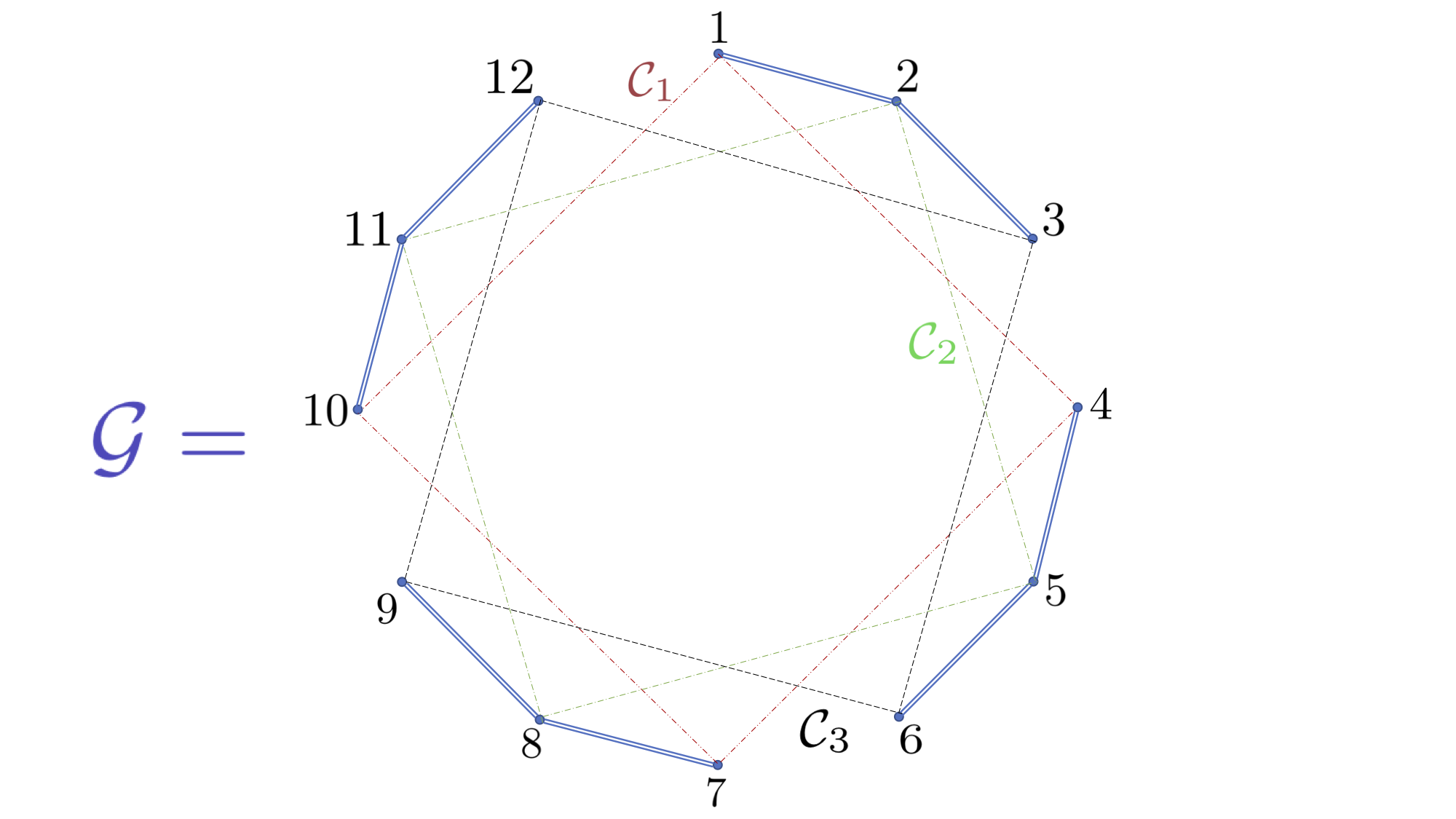}
\end{flushleft}
\vspace{-3mm}
\caption{The dependency graph $\graph=\{1, \ldots, 12\}$ corresponding to the toy problem of Figure \ref{fig:Transfo}, where dependent nodes are connected with vertices in blue double-line. The exact proper fractional cover $\Cset_1$, $\Cset_2$ and $\Cset_3$ is shown in dashed. The fractional chromatic number is in this case $\chi^*(\graph)=K-1=3$.}
\label{fig:DepGraph}
\end{wrapfigure}

The fractional Rademacher complexity introduced in \cite{UsunierAG05} entails our analysis~:
	    \[\rademacher_{T(\TrnSet)}(\mathcal{H})\hspace{-3pt}\doteq\hspace{-3pt}\frac{2}{N}
	    \expectation_{\xi}\hspace{-9pt}\sum_{k\in[K-1]}\hspace{-10pt}\omega_k\expectation_{\Cset_k}\hspace{-3pt}\sup_{h\in\mathcal{H}}\hspace{-5pt}\sum_{\alpha\in\Cset_k \atop \bfZ_\alpha \in T(\TrnSet)}\hspace{-6pt}\xi_\alpha h(\bfZ_\alpha),
	\]
	with $(\xi_i)_{i=1}^N$ a sequence of 
	independent Rademacher variables verifying 
	$\proba(\xi_n\!\!=\!\!1)=\proba(\xi_n\!\!=\!\!-1)=\frac{1}{2}$. If other is not specified explicitly we assume below all $\omega_k = 1$.
	
Our first result that bounds the generalization error of a function $h\in\cal H$; $R(h)=\mathbb{E}_{T(\TrnSet)} [\EmpRisk{T(\TrnSet)}(h)]$, with respect to its empirical error $\EmpRisk{T(\TrnSet)}(h)$ over a transformed training set, $T(\TrnSet)$, and the  fractional Rademacher complexity, $\rademacher_{T(\TrnSet)}(\mathcal{H})$, states as~:
	
\begin{theorem}
	\label{thm:MulticlassRedBinaryGenBound}
        Let $\TrnSet=(\obs_i^{y_i})_{i=1}^m\in (\Input\times \Output)^m$ be a dataset of $m$ examples drawn i.i.d. according to a probability distribution $\Dist$ over $\Input\times \Output$ and $T(\TrnSet)=((\bfZ_i,\tilde y_i))_{i=1}^N$ the transformed set obtained as in Eq.~\eqref{eq:transfo}. Then for any $1>\delta>0$ and $0/1$ loss $\ell:\{-1,+1\}\times\mathbb{R}\rightarrow [0,1]$, 
        with probability at least~$(1-\delta)$ the following generalization bound holds for all $h\in \mathcal \Fset_{r}$~:
        \vspace{-1mm}
        \[
            R(h)\leq 
                \EmpRisk{T(\TrnSet)}(h) + \rademacher_{T(\TrnSet)}(\ell\circ\Fset_r) 
                +\frac{5}{2}\left(\sqrt{\rademacher_{T(\TrnSet)}(\ell\circ\Fset_r)}
                +\sqrt{\frac{r}{2}}\right)\sqrt{\frac{\log \frac{1}{\delta}}{m}}
                +\frac{25}{48}\frac{\log \frac{1}{\delta}}{m}.
        \]
        \vspace{-2mm}
\end{theorem}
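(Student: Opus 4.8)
The plan is to apply a Talagrand-type concentration inequality for suprema of empirical processes indexed by dependent variables, following \cite{RalaivolaAmini15}, and to combine it with a symmetrization argument producing the fractional Rademacher complexity. First I would introduce the centered supremum
\[
\Phi \doteq \sup_{h\in\Fset_r}\left(R(h)-\EmpRisk{T(\TrnSet)}(h)\right),
\]
and note that, since $R(h)=\expectation_{T(\TrnSet)}[\EmpRisk{T(\TrnSet)}(h)]$, bounding $\Phi$ with high probability is exactly what the statement requires. The crucial structural fact is that the $N=m(K-1)$ summands defining $\EmpRisk{T(\TrnSet)}(h)$ are dependent, yet their dependency graph $\graph$ admits an exact proper fractional cover (Definition~\ref{def:chromatic}) with fractional chromatic number $\chi^*(\graph)=K-1$, as Figure~\ref{fig:DepGraph} illustrates. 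The $N$ summands thus split into $K-1$ groups of mutually independent pairs, so the \emph{effective} sample size is $N/\chi^*(\graph)=m$; this is precisely why $m$, and not $N$, will appear in every denominator of the final bound.

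Second, I would invoke the dependent-case Talagrand inequality of \cite{RalaivolaAmini15}, which controls $\Phi$ by its expectation $\expectation[\Phi]$ plus a deviation term. For a loss taking values in $[0,1]$ this inequality has a Bousquet-type form in which the deviation is governed by a variance proxy of the shape $\sup_{h\in\Fset_r}\mathbb{V}[\ell\circ h]+c\,\expectation[\Phi]$, scaled by $\sqrt{\log(1/\delta)/m}$, together with a second-order term of order $\log(1/\delta)/m$. The fractional cover enters here exactly as it does in the definition of $\rademacher_{T(\TrnSet)}$: the weights $\omega_k$ and the independent sets $\Cset_k$ reduce the dependent concentration to $K-1$ independent sub-problems, which are then recombined, yielding the factor $1/m$ rather than $1/N$.

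Third, a symmetrization step adapted to the fractional cover bounds $\expectation[\Phi]\le\rademacher_{T(\TrnSet)}(\ell\circ\Fset_r)$, introducing the complexity term that appears linearly in the statement. Finally I would use the defining constraint $\mathbb{V}[h]\le r$ for $h\in\Fset_r$ to bound the variance proxy by a sum of $\rademacher_{T(\TrnSet)}(\ell\circ\Fset_r)$ and a term proportional to $r$, then apply subadditivity of the square root, $\sqrt{a+b}\le\sqrt{a}+\sqrt{b}$, to split it into the separate terms $\sqrt{\rademacher_{T(\TrnSet)}(\ell\circ\Fset_r)}$ and $\sqrt{r/2}$ that appear in the bound. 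Propagating the numerical constants of the dependent Bousquet inequality through this split is what produces the coefficients $\tfrac{5}{2}$ and $\tfrac{25}{48}$.

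The hard part will be the second step: carrying the classical i.i.d.\ local-Rademacher/Talagrand machinery over to the dependent setting while preserving its \emph{local}, variance-sensitive character. One must check that the fractional-cover decomposition is compatible with the self-bounding property underlying the Bousquet inequality, so that the conditional variance surviving within each independent set $\Cset_k$ is still controlled by $r$, and that recombining the $K-1$ blocks through the weights $\omega_k$ does not inflate the constants beyond those claimed. Once this dependent Talagrand inequality is established with effective sample size $m$, the remaining symmetrization and square-root steps are routine.
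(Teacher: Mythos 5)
Your proposal follows essentially the same route as the paper's proof: the paper also defines the (rescaled) supremum $\Phi(\underline{X},r)$, bounds it by the fractionally sub-additive quantity $Z$ built from the exact proper fractional cover with $\chi^*(\graph)=K-1$, applies the Talagrand-type inequality of \cite{RalaivolaAmini15} with $c=25(K-1)/16$ and $v\le Nr+2\expectation[Z]$, symmetrizes to get $\expectation[Z]\le N\rademacher_{T(\TrnSet)}(\ell\circ\Fset_r)$, and obtains the constants $\tfrac{5}{2}$ and $\tfrac{25}{48}$ by the inequalities $\sqrt{a+b}\le\sqrt{a}+\sqrt{b}$, $2\sqrt{ab}\le ua+b/u$ and optimizing over $u$. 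The only substantive detail you leave implicit is this explicit optimization over $u$, which is exactly where the stated coefficients come from.
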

The proof is provided in the supplementary material, and it relies on the idea of splitting up the sum~\eqref{eq:EmpLossMC2} into several parts, each part being a sum of independent variables. 

\subsection{Aggressive Double Sampling}

Even-though the previous multi-class to binary transformation $T$ with a proper projection function $\phi$ allows to redefine the learning problem in a dyadic feature space of dimension $p \ll d$, the increased number of examples can lead to a large computational overhead. In order to cope with this problem, we propose a $(\pi, \kappa)$-\emph{double subsampling} of $T(\TrnSet)$, which first aims to balance the presence of classes by constructing a new training set $\TrnSet_\pi$ from $\TrnSet$ with probabilities $\pi=(\pi_k)_{k=1}^K$. 
\begin{wrapfigure}[20]{r}{0.69\textwidth}
\vspace{-3mm}\begin{algorithm}[H]
\KwIn{Labeled training set $\TrnSet=(\obs_i^{y_i})_{i=1}^m$}
{\bf initialization:} $\TrnSet_\pi \leftarrow \emptyset$\; 
$T_\kappa(\TrnSet_\pi)\leftarrow \emptyset$ \;
\For{$k=1..K$}{ 
 Draw randomly a set $\TrnSet_{\pi_k}$ of examples of class $k$ from $\TrnSet$ with  
 probability~$\pi_k$\;
 $\TrnSet_{\pi} \leftarrow \TrnSet_\pi \cup \TrnSet_{\pi_k}$\;  
 }
 \ForAll{$\obs^{y} \in \TrnSet_\pi $}{
    Draw uniformly a set $\Output_{\obs^y}$ of $\kappa$ classes from $\Output\backslash\{y\}$ \,  $\triangleright$  $\kappa \ll K$\;
    \ForAll{$k\in\Output_{\obs^y} $}{
        \If{$k<y$}{
            $T_\kappa(\TrnSet_\pi)\leftarrow T_\kappa(\TrnSet_\pi) \cup \left(\bfZ=\left(\phi(\obs^{k\vphantom{k+1}}),\phi(\obs^{y})\right)\!,~~~  \tilde{y} = -1 \right) $\;
        }
        \Else{
            $T_\kappa(\TrnSet_\pi)\leftarrow T_\kappa(\TrnSet_\pi) \cup \left(\bfZ=\left(\phi(\obs^{y}),\phi(\obs^{k})\right)\!, ~~~\tilde{y} = +1 \right) $\;
        }
    }
}
\KwRet{$T_\kappa(\TrnSet_\pi)$}
\label{algo}
\caption{$(\pi,\kappa)$-\mRb}
\end{algorithm}
\end{wrapfigure}
The idea here is to overcome the curse of long-tailed class distributions exhibited in majority of large-scale multi-class classification problems \cite{BabbarSIGKDD14} by oversampling the most small size classes and by subsampling the few big size classes in $\TrnSet$. The hyperparameters  $\pi$ are formally defined as $\forall k, \pi_k=P(\obs^y \in \TrnSet_\pi | \obs^y \in \TrnSet)$. In practice we set them inversely proportional to the size of each class in the original training set;  $\forall k, \pi_k\propto 1/\mu_k$ where $\mu_k$ is the proportion of class $k$ in $\TrnSet$. The second aim is to reduce the number of  dyadic examples  controlled by the hyperparameter $\kappa$. The pseudo-code of this \emph{aggressive double sampling} procedure, referred to as $(\pi,\kappa)$-\mRb, is depicted above and it is composed of two main steps.

\begin{enumerate}
    \item For each class $k\in\{1,\ldots,K\}$, draw randomly a set $\TrnSet_{\pi_k}$ of examples from $\TrnSet$ of that class  with  
    probability $\pi_k$, and let $\TrnSet_\pi=\displaystyle{\bigcup_{k=1}^{K}}\TrnSet_{\pi_k}$; 
    \item For each  example $\obs^{y}$ in $\TrnSet_\pi$,  draw uniformly $\kappa$ adversarial classes in $\Output\backslash\{y\}$. 
\end{enumerate}

After this double sampling, we apply the transformation $T$ defined as in Eq.~\eqref{eq:transfo}, leading to a set $T_\kappa(S_\pi)$ of size~$\kappa|\TrnSet_\pi|\ll N$. 

In Section~\ref{sec:Exps}, we will show that this procedure practically leads to dramatic improvements in terms of memory consumption, computational complexity, and a higher multi-class prediction accuracy when the number of classes is very large.  The empirical loss over the transformation of the new subsampled training set $\TrnSet_\pi$ of size $M$, outputted by the  $(\pi,\kappa)$-\mRb\ algorithm is~:
\begin{equation}
\label{eq:DEmpLossMCSampled}
    \EmpRisk{T_\kappa(\TrnSet_\pi)}(h) = \frac{1}{\kappa M} \sum_{({\tilde y}_\alpha, \bfZ_\alpha) \in T_\kappa(\TrnSet_\pi)} \Ind{{\tilde y_\alpha}h(z_\alpha) \le 0} = \frac{1}{\kappa M}\sum_{\obs^y \in \TrnSet_\pi}\sum_{y'\in \Output_{\obs^y}}
    \Ind{g(\obs^{y})- g(\obs^{y'}) \leq 0},
\end{equation}
which is essentially the same empirical risk as the one defined in Eq.~\eqref{eq:EmpLossMC} but taken with respect to the training set outputted by the $(\pi,\kappa)$-\mRb\ algorithm. Our main result is the following theorem which bounds the generalization error of a function $h\in\cal H$ learned by minimizing $\EmpRisk{T_\kappa(\TrnSet_\pi)}$.

\begin{theorem}
\label{thm:SlowRatesAp}
 Let $\TrnSet=(\obs_i^{y_i})_{i=1}^m\in (\Input\times \Output)^m$ be a training set of size $m$  i.i.d. according to a probability distribution $\Dist$ 
 over $\Input\times \Output$, and $T(\TrnSet)=((\bfZ_i,\tilde y_i))_{i=1}^N$
 the transformed set obtained with the transformation function $T$ defined as in Eq.~\eqref{eq:transfo}. Let $\TrnSet_\pi \subseteq \TrnSet$, $|\TrnSet_\pi| = M$, 
 be  a training set outputted by the algorithm $(\pi,\kappa)$-\mRb\ and $T(\TrnSet_\pi)\subseteq T(\TrnSet)$ its corresponding transformation. Then for any $1>\delta>0$ with probability at least $(1-\delta)$ the following risk bound holds for all $h\in {\cal H}$~:
    \[
        R(h) \leq 
            \alpha\EmpRisk{T_\kappa(\TrnSet_\pi)}(h) + 
            \alpha \rademacher_{T_\kappa(\TrnSet_\pi)}(\ell\circ \mathcal{H}) + 
            \alpha \sqrt{\frac{(K-1)\log \frac{2}{\delta}}{2M\kappa}} + 
            \sqrt{\frac{2 \alpha\log \frac{4K}{\delta}}{\beta(m-1)}} + 
            \frac{7\beta \log \frac{4K}{\delta}}{3(m-1)}.
    \]
    Furthermore, for all functions in the class ${\cal H}_r$, we have the following  generalization bound  that holds with probability at least $(1-\delta)$~:
    \begin{align*}
    R(h)\leq &
        \alpha\EmpRisk{T_\kappa(\TrnSet_\pi)}(h) + 
        \alpha \rademacher_{T_\kappa(\TrnSet_\pi)}(\ell\circ \mathcal{H}_r) + 
        \sqrt{\frac{2 \alpha\log \frac{4K}{\delta}}{\beta(m-1)}} + 
        \frac{7\beta \log \frac{4K}{\delta}}{3(m-1)} \\ 
        & + 
        \frac{5\alpha}{2}\left(\sqrt{\rademacher_{T_\kappa(\TrnSet_\pi)}(\ell\circ\Fset_r)} +   \sqrt{\frac{r}{2}}\right)\sqrt{\frac{(K-1)\log \frac{2}{\delta}}{M\kappa}} + 
        \frac{25\alpha}{48}\frac{\log \frac{2}{\delta}}{M}, 
    \end{align*}
where $\ell:\{-1,+1\}\times\mathbb{R}\rightarrow [0,1]$  $0/1$ is an instantaneous loss, and 
$\alpha  = \max_{y:\, 1\leq y \leq K} \eta_y/\pi_y$, $\beta = \max_{y:\, 1\leq y \leq K} 1/\pi_y$
and $\eta_y > 0$ is the proportion of class $y$ in $\TrnSet$.

\end{theorem}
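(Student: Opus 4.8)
The plan is to control the gap $R(h)-\alpha\EmpRisk{T_\kappa(\TrnSet_\pi)}(h)$ by chaining through a single intermediate quantity, one for each sampling stage of the $(\pi,\kappa)$-\mRb\ algorithm, and then to handle the supremum over $\mathcal H$ by symmetrization. The natural intermediate is the empirical risk on the \emph{full} transformation of the subsampled set, $\EmpRisk{T(\TrnSet_\pi)}(h)=\frac1M\sum_{\obs^y\in\TrnSet_\pi}e(g,\obs^y)$, in which each retained example of $\TrnSet_\pi$ is paired with all $K-1$ adversarial classes rather than only $\kappa$ of them. I would start from the exact identity
\[
R(h)-\alpha\EmpRisk{T_\kappa(\TrnSet_\pi)}(h)
= \underbrace{\big(R(h)-\alpha\EmpRisk{T(\TrnSet_\pi)}(h)\big)}_{\text{(I) distribution shift and }\pi\text{-sampling}}
+\underbrace{\alpha\big(\EmpRisk{T(\TrnSet_\pi)}(h)-\EmpRisk{T_\kappa(\TrnSet_\pi)}(h)\big)}_{\text{(II) }\kappa\text{-sampling}},
\]
bound each bracket on a high-probability event, and allocate a share of $\delta$ to each so that a final union bound produces the mixed factors $\log\frac{2}{\delta}$ and $\log\frac{4K}{\delta}$ (the $K$ coming from a union over classes in part (I)).

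For part (I) I would decompose the risk by class, $R(h)=\sum_{y=1}^{K}\eta_y R_y(h)$ with $R_y(h)\doteq\expectation_{\obs^y\sim\Dist\mid y}[e(g,\obs^y)]$. Because $\TrnSet_\pi$ reweights class $y$ by its sampled frequency (proportional to $\pi_y\eta_y$) instead of $\eta_y$, bounding the per-class reweighting factor by $\alpha=\max_y \eta_y/\pi_y$ converts the true $\eta$-weighting into $\alpha$ times the $\pi$-weighted (sampled) average, which accounts for the deliberate shift in the class distribution and explains the coefficient $\alpha$ in front of the empirical risk. The remaining fluctuation — induced by keeping each class-$y$ example independently with probability $\pi_y$ — is controlled, uniformly over $h$, by a Bernstein/bounded-difference argument applied classwise to the keep-indicators: the per-class range and variance are governed by $1/\pi_y$ (hence by $\beta=\max_y 1/\pi_y$) and by $\eta_y/\pi_y$ (hence by $\alpha$), and a union bound over the $K$ classes yields exactly the two terms $\sqrt{2\alpha\log(4K/\delta)/(\beta(m-1))}$ and $7\beta\log(4K/\delta)/(3(m-1))$. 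I expect this to be \textbf{the main obstacle}: it must simultaneously absorb the intentional class-distribution shift (a bias, controlled by $\alpha$) and the sampling variance (controlled by $\beta$) while keeping the empirical-risk coefficient at precisely $\alpha$, and matching the displayed Bernstein constants requires careful bookkeeping of the keeping probabilities $\pi_y$ and of how the effective sample size $m-1$ enters.

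Part (II) is where the design of the $\kappa$-step pays off. Conditioning on $\TrnSet_\pi$, for each $\obs^y$ the set $\Output_{\obs^y}$ of $\kappa$ adversarial classes is drawn uniformly from $\Output\setminus\{y\}$, so $\expectation_{\Output_{\obs^y}}\big[\tfrac1\kappa\sum_{y'\in\Output_{\obs^y}}\Ind{g(\obs^y)\le g(\obs^{y'})}\big]=e(g,\obs^y)$; hence $\EmpRisk{T_\kappa(\TrnSet_\pi)}(h)$ is an \emph{unbiased} estimate of $\EmpRisk{T(\TrnSet_\pi)}(h)$ over the (independent across examples) uniform class draws. A concentration inequality for this average of $M$ independent per-example contributions, with the deviation scale set by the range/variance of the $\kappa$-subsampled inner sum relative to the full inner sum over the $K-1$ classes, produces the term $\alpha\sqrt{(K-1)\log(2/\delta)/(2M\kappa)}$. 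To pass from these fixed-$h$ controls to a statement uniform over $h\in\mathcal H$, I would apply the standard symmetrization argument to the $\kappa$-sampled empirical process, which introduces the fractional Rademacher complexity $\rademacher_{T_\kappa(\TrnSet_\pi)}(\ell\circ\mathcal H)$ (carrying the $\alpha$ prefactor). Collecting parts (I), (II) and the symmetrization bound under one union bound over all failure events yields the first, slow-rate inequality.

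For the refined bound over $\mathcal H_r$, I would replace the plain symmetrization step by the local-Rademacher / Talagrand-type machinery already established in Theorem~\ref{thm:MulticlassRedBinaryGenBound}, applied to the doubly-sampled transformed set $T_\kappa(\TrnSet_\pi)$ regarded as the transformed sample. This swaps the basic Rademacher term for the variance-aware contributions $\frac{5\alpha}{2}\big(\sqrt{\rademacher_{T_\kappa(\TrnSet_\pi)}(\ell\circ\mathcal H_r)}+\sqrt{r/2}\big)\sqrt{(K-1)\log(2/\delta)/(M\kappa)}$ and $\frac{25\alpha}{48}\log(2/\delta)/M$, inheriting the constants $5/2$ and $25/48$ together with the $\sqrt{r/2}$ coming from the variance constraint $\mathbb V[h]\le r$, now with the effective sample size reflecting $M$ and the $\kappa$-subsampling, and with the overall factor $\alpha$ supplied by part (I). Adding back the part (I) distribution-shift terms $\sqrt{2\alpha\log(4K/\delta)/(\beta(m-1))}+7\beta\log(4K/\delta)/(3(m-1))$ then gives the second inequality. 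The only delicate point in this last step is to verify that Theorem~\ref{thm:MulticlassRedBinaryGenBound} continues to apply verbatim once the base sample has been replaced by the $\pi$-resampled set, i.e.\ that conditioning on $\TrnSet_\pi$ preserves the dependency-graph structure and fractional chromatic number $K-1$ used there, so that the local-Rademacher constants transfer unchanged up to the $\alpha$ scaling and the $m\to M$, $\kappa$ substitutions.
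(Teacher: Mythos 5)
Your two-stage decomposition through $\EmpRisk{T(\TrnSet_\pi)}(h)$ is not the paper's route, and as described part (I) has a genuine gap. The paper never uses the empirical risk on the fully expanded subsample as an intermediate quantity; instead, Lemma~\ref{lem:main} passes from $R(h)$ to the \emph{population} risk under the reweighted measure $\bar \Dist$, via the class decomposition $R(h)=\sum_y \prob[y(\obs)=y]\,\expectation[e_h\mid y]$ and the deterministic inequality $R(h)\le\max_y\frac{\prob[y(\obs)=y]}{\pi_y/\pi}\cdot\expectation_{\obs^y\sim\bar \Dist}[e_h(\obs^y)]$. The only probabilistic ingredient at that stage is the empirical Bernstein inequality of \cite{maurer2009empirical} applied to the class-membership indicators of the original i.i.d.\ sample, relating $\prob[y(\obs)=y]$ to $\eta_y$ (whence the $(m-1)$ denominators and the constants $\alpha,\beta$); this quantity does not involve $h$ at all, so uniformity over $h$ is free. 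All of the sampling randomness --- which examples survive the $\pi$-step \emph{and} which $\kappa$ adversarial classes are drawn --- is then absorbed in a single uniform-convergence step: Theorem~4 of \cite{UsunierAG05} for the first bound, and Lemma~\ref{lem:MulticlassRedBinaryGenBound} for the $\Fset_r$ bound, applied directly to $T_\kappa(\TrnSet_\pi)$ viewed as a dependent sample of size $M\kappa$, with Lemma~\ref{lem:chromatic} guaranteeing $\chi^*_{T_\kappa(\TrnSet_\pi)}\le \chi^*_{T(\TrnSet)}=K-1$. This one step is the source of both the Rademacher term and the $\sqrt{(K-1)\log(2/\delta)/(2M\kappa)}$ deviation term.

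The gap is this: in your part (I) you must control $\sup_{h}\bigl(R(h)-\alpha\EmpRisk{T(\TrnSet_\pi)}(h)\bigr)$, and $\EmpRisk{T(\TrnSet_\pi)}(h)$ is a random empirical average over the retained examples that depends on $h$. A Bernstein or bounded-difference argument on the keep-indicators, even classwise with a union bound over $K$, concentrates class \emph{frequencies}, not the $h$-indexed empirical process; it cannot yield a bound uniform over $h$ without a capacity term accounting for the example-level randomness. In your assembly the only Rademacher complexity appears in part (II), from symmetrizing the $\kappa$-class draws conditionally on $\TrnSet_\pi$, so the fluctuation of the example sample itself is never paid for. (Your unbiasedness observation in part (II) is correct, and your treatment of the $\Fset_r$ bound and the chromatic-number monotonicity concern do match the paper's Lemmas \ref{lem:chromatic} and \ref{lem:MulticlassRedBinaryGenBound}.) To repair the argument, either collapse the two stages into one and apply the fractional Rademacher generalization bound directly to $T_\kappa(\TrnSet_\pi)$ as the paper does, or insert an explicit uniform-convergence term for the $R(h)$-versus-$\EmpRisk{T(\TrnSet_\pi)}(h)$ gap --- but the latter produces two Rademacher terms rather than the single one appearing in the stated bound.
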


The proof is provided in the supplementary material. This theorem hence paves the way for the consistency of the empirical risk minimization principle \cite[Th. 2.1, p. 38]{Vapnik1998} defined over the double sample  reduction strategy we propose.  



\subsection{Prediction with Candidate Selection} 

The prediction is carried out in the dyadic feature space, by first considering the pairs constituted by a test observation and all the classes, and then choosing the class that leads to the highest score by the learned classifier. 
\begin{wrapfigure}[9]{r}{0.64\textwidth}
\vspace{-4.7mm}
\begin{algorithm}[H]
\KwIn{Unlabeled test set $\TstSet$\;
Learned function $f^*:\mathbb{R}^p\rightarrow \mathbb{R}$\;}
{\bf initialization:}{ $\Omega \leftarrow \emptyset$}\;
 \ForAll{$\obs \in \TstSet $}{
    Select $\Output_\obs\subseteq\Output$ candidate set of $q$ nearest-centroid classes\;
    $\Omega \leftarrow \Omega \cup \argmax_{k \in \Output_\obs} f^*(\phi(\obs^k))$ \;
}
\KwRet{predicted classes $\Omega$}
\caption{Prediction with Candidate Selection Algorithm }\label{algo_pred} 
\end{algorithm}
\end{wrapfigure}
In the large scale scenario, computing the feature representations for all classes may require a huge amount of time. 
To overcome this problem we sample over classes by choosing just those that are the nearest to a test example, based on its distance with class centroids. 
Here we propose to consider class centroids as the average of vectors within that class. Note that class centroids are computed once in the preliminary projection of training examples and classes in the dyadic feature space  and thus represent no additional computation at this stage.  The algorithm above presents the pseudocode of this candidate based selection strategy.

%

\section{Experiments}
\label{sec:Exps}

In this section, we provide an empirical evaluation of the proposed reduction approach with the $(\pi,\kappa)$-{\mRb} sampling strategy for large-scale multi-class classification of document collections. First, we present the mapping $\phi:\Input\times\Output \rightarrow \mathbb{R}^p$.  Then, we provide a statistical and computational comparison of our method with state-of-the-art large-scale approaches on popular datasets.

\subsection{a Joint example/class representation for text classification } 
\label{features}

The particularity of text classification is that documents are represented in a vector space induced by the vocabulary of the corresponding collection \cite{Salton:1975:VSM:361219.361220}. Hence each class can be considered as a mega-document, constituted by the concatenation of all of the documents in the training set belonging to it, and simple feature mapping of examples and classes can be defined over their common words. Here we used $p=10$ features  inspired from learning to rank \cite{liu2007letor} by resembling a class and a document to respectively a document and a query (Table~\ref{tab:features}). All features except feature $9$, that is the distance of an example $x$ to the centroid of all examples of a particular class $y$, are classical. In addition to its predictive interest, the latter  is also used in prediction for performing candidate preselection.
\begin{table}[t!]
\begin{center}
\begin{tabular}{lllll}
\hline 
\multicolumn{5}{c}{Features in the joint example/class representation representation $\phi(\obs^y)$.} \\\hline 
\textbf{1.}~ $\displaystyle{\sum_{t\in y \cap \obs} \log(1+y_t)}$ &  ~~& \textbf{2.}~  $\displaystyle{\sum_{t\in y \cap \obs} \log\left(1+\frac{l_{\TrnSet}}{F_t}\right)}$ & ~~&
\textbf{3.}~  $\displaystyle \sum_{t\in y \cap \obs} I_t$  \\ 
\textbf{4.}~ $\displaystyle \sum_{t\in y \cap \obs} \frac{y_t}{|y|}.I_t$ & ~~&
\textbf{5.}~ $\displaystyle{\sum_{t\in y \cap \obs} \log\left(1+\frac{y_t}{|y|}\right)}$ & ~~& \textbf{6.}~ $\displaystyle{\sum_{t\in y \cap \obs} \log\left(1+\frac{y_t}{|y|}.I_t\right)}$ \\  \textbf{7.}~$\displaystyle{\sum_{t\in y \cap \obs} \log\left(1+\frac{y_t}{|y|}. \frac{l_{\TrnSet}}{F_t}\right)}$ &~~ &  \textbf{8.}~ $\displaystyle \sum_{t\in y \cap \obs} 1$  & ~~& \textbf{9.}~ $d(\obs^y, \texttt{centroid}(y))$ \vspace*{0.2cm} \\[0.3cm]
\multicolumn{5}{l}{\vspace*{0.2cm}
\textbf{10.}~ BM25 = {\small $\sum_{t\in y \cap x} I_{t}. \frac{2 \times y_{t}}{y_{t} + 
(0.25 + 
    0.75 \cdot \texttt{len}(y)/\texttt{avg}(\texttt{len}(y))}$}  } \\[0.2cm]
\hline
\end{tabular}
\end{center}
\caption{Joint example/class representation for text classification, where $t\in y \cap \obs$  are terms that are present in both the class $y$'s mega-document and document $\obs$. Denote by $\mathcal V$ the set of distinct terms within $\TrnSet$ then $\obs_t$ is the frequency of term $t$ in $\obs$, $y_t=\sum_{\obs\in y} \obs_t$, $|y|=\sum_{t\in \mathcal V}y_t$, $F_t=\sum_{\obs\in\TrnSet}\obs_t$, $l_{\TrnSet}=\sum_{t\in\mathcal V}\TrnSet_t$. Finally, $I_t$ is the inverse document frequency of term $t$, $len(y)$ is the length (number of terms) of documents in class $y$, and $avg(len(y))$ is the average of document lengths for all the classes}
\label{tab:features}
\vspace{-10mm}
\end{table}

\smallskip

Note that for other large-scale multi-class classification applications like recommendation with extremely large number of offer categories or image classification, a same kind of mapping can either be learned or defined using their characteristics  \cite{Volkovs:12, Weston11}.
\subsection{Experimental Setup} 

{\bf Datasets. }
We evaluate the proposed method using popular datasets from the Large Scale Hierarchical Text Classification challenge (\texttt{LSHTC}) 1 and 2  \cite{2015arXiv150308581P}. These datasets are provided in a pre-processed format using stop-word removal and stemming. Various characteristics of these datesets including the statistics of train, test and heldout are listed in Table~\ref{tab:data_statistics}. 
Since, the datasets used in LSHTC2 challenge were in multi-label format, we converted them to multi-class format by replicating the instances belonging to different class labels. Also, for the largest dataset (WIKI-large) used in LSHTC2 challenge, we used samples with 50,000 and 100,000 classes. The smaller dataset of LSHTC2 challenge is named as WIKI-Small, whereas the two 50K and 100K samples of large dataset are named as WIKI-50K and WIKI-100K in our result section.  

\begin{table}[!htb]
    \centering
    \begin{tabular}{c c c c c c}
        \hline
       Datasets  & \# of classes, $K$ & Train Size & Test Size & Heldout Size& Dimension, $d$ \\ \hline
       LSHTC1      & 12294  & 126871 & 31718 & 5000 & 409774 \\ 
       DMOZ        & 27875  & 381149 & 95288 & 34506& 594158 \\
       WIKI-Small  & 36504  & 796617 & 199155& 5000 & 380078\\ 
       WIKI-50K    &50000   & 1102754& 276939& 5000 & 951558 \\ 
       WIKI-100K   &100000  & 2195530& 550133& 5000 & 1271710\\ \hline
    \end{tabular}
    \caption{Characteristics of the datasets used in our experiments}
    \label{tab:data_statistics}
    \vspace{-7mm}
\end{table}


{\bf Baselines. }
We compare the proposed approach, denoted as the sampling strategy by $(\pi,\kappa)$-{\mRb}, with popular baselines listed below:
\vspace{-2mm}
\begin{itemize}
    \item $\OVA$: LibLinear \cite{Fan:2008} implementation of   one-vs-all SVM.
    \item $\MSVM$: LibLinear implementation of multi-class SVM proposed in \cite{Crammer:2002}.
    \item $\RecallTree$ \cite{daume2016logarithmic}: A recent tree based multi-class classifier implemented in Vowpal Wabbit.  
    \item $\fastxml$ \cite{prabhu2014fastxml}: An extreme multi-class classification method which performs partitioning in the feature space for faster prediction.
    \item $\pfastrexml$ \cite{jain2016extreme}: Tree ensemble based extreme classifier for multi-class and multilabel 
    problems. 
    \item $\pdsparse$ \cite{yen2016pd}: A recent approach which uses multi-class loss with $\ell_1$-regularization. 
\end{itemize}

Referring to the work \cite{yen2016pd}, we did not consider other recent methods $\sleec$ \cite{bhatia2015sparse} and $\leml$ \cite{yu2014large} in our experiments, since they have been shown to be consistently outperformed by the above mentioned state-of-the-art approaches.

  
    {\bf Platform.} In all of our experiments, we used a machine with an Intel Xeon 2.60GHz processor with 256 GB of RAM.

    {\bf Parameters.} Each of these methods require tuning of various hyper-parameters that influence their performance. For each methods, we tuned the hyperparameters over a heldout set and used the combination which gave best predictive performance. The list of used hyperparameters for the results we obtained are reported in the supplementary material (Appendix~B).
    
    {\bf Evaluation Measures.}
    Different approaches are evaluated over the test sets using accuracy and the macro F1 measure (\MaF), which is the harmonic average of macro precision and macro recall; higher \MaF thus corresponds to better performance. As opposed to accuracy, macro F1 measure is not affected by the class imbalance problem inherent to multi-class classification, and is commonly used as a robust measure for comparing predictive performance of classification methods.

\section{Results}
\label{sec:res}

\begin{table}[t!]
\centering
\scriptsize
 \begin{tabular}{|c|c|c|c|c|c|c|c|c|} 
 \hline
 Data &   & $\OVA$ & $\MSVM$  & $\RecallTree$ & $\fastxml$ & $\pfastrexml$ & $\pdsparse$ & $(\pi,\kappa)$-\mRb \\ 
 \hline\hline
 
 \textbf{LSHTC1} & train time  & 23056s & 48313s & 701s & 8564s  & 3912s  & 5105s  &  {321s}  \\
 m = 163589      & predict time& 328s & 314s &  {21s} & 339s   & 164s   & 67s     & 544s  \\ 
 d = 409774      & total memory& 40.3G & 40.3G &  {122M} & 470M   & 471M   & 10.5G  & 2G  \\ 
 K = 12294       & Accuracy       & 44.1\% & 36.4\%& 18.1\%& 39.3\% &39.8\%  & 45.7\%& 37.4\% \\ 
    &\MaF        & 27.4\% & 18.8\% & 3.8\% & 21.3\% & 22.4\% &  {27.7\%} & 26.5\%  \\ \hline
\textbf{DMOZ}   & train time   & 180361s & 212356s & 2212s & 14334s & 15492s  & 63286s &  {1060s}  \\
m = 510943      & predict time & 2797s   & 3981s &  {47s} & 424s   & 505s  & 482s   & 2122s  \\
d = 594158      & total memory & 131.9G  & 131.9G &  {256M} & 1339M  & 1242M   & 28.1G  & 5.3G   \\
K = 27875 & Accuracy       & 37.7\% &32.2\%  & 16.9\% & 33.4\%& 33.7\% &40.8\% &  27.8\%\\      
& \MaF          & 22.2\%  & 14.3\% & 1.75\% & 15.1\% & 15.9\% &  {22.7\%} & 20.5\% \\\hline
 
 \textbf{WIKI-Small} & train time    & 212438s & >4d & 1610s & 10646s & 21702s & 16309s &  {1290s}  \\
m = 1000772           & predict time & 2270s & NA &  {24s} & 453s  & 871s  & 382s   & 2577s  \\
d = 380078            & total memory & 109.1G & 109.1G &  {178M} & 949M  & 947M  & 12.4G  & 3.6G   \\
K = 36504            & Accuracy       & 15.6\% & NA & 7.9\% & 11.1\%& 12.1\%&15.6\%  & 21.5\% \\
& \MaF          & 8.8 \% & NA & <1\% & 4.6\% & 5.63\% & 9.91\% &  {13.3\%} \\ \hline

\textbf{WIKI-50K} & train time     & NA   & NA  & 4188s & 30459s & 48739s & 41091s &  {3723s}  \\
m = 1384693         & predict time & NA   & NA  &  {45s} & 1110s  & 2461s  & 790s   & 4083s  \\
d = 951558          & total memory & 330G & 330G&  {226M} & 1327M  & 1781M  & 35G    & 5G     \\
K = 50000           & Accuracy       & NA & NA & 17.9\% & 25.8\% & 27.3\% & 33.8\% & 33.4\%  \\
& \MaF          & NA   & NA  & 5.5\% & 14.6\% & 16.3\% & 23.4\% &  {24.5\%}  \\ \hline
 
\textbf{WIKI-100K} & train time     & NA   & NA   &  {8593s} & 42359s & 73371s & 155633s & 9264s  \\
m = 2750663          & predict time & NA   & NA   &  {90s} & 1687s  & 3210s  & 3121s   & 20324s \\
d = 1271710          & total memory & 1017G& 1017G&  {370M} & 2622M  & 2834M  & 40.3G   & 9.8G   \\
K = 100000           & Accuracy       & NA & NA & 8.4\% & 15\% & 16.1\% & 22.2\% & 25\% \\
&\MaF         & NA   & NA   & 1.4\% & 8\%    & 9\%    & 15.1\%  &  {17.8\%}  \\ \hline
 
 \hline
 \end{tabular}
 
 \caption{Comparison of the result of various baselines in terms of time, memory, accuracy, and macro~F1-measure}
\label{tab:res_comp}
\vspace{-7mm}
\end{table}

The parameters of the datasets along with the results for compared methods are shown in Table \ref{tab:res_comp}. The results are provided in terms of train and predict times, total memory usage, and predictive performance measured with accuracy and macro F1-measure (\MaF). For better visualization and comparison, we plot the same results as bar plots in Fig.~\ref{fig:res_comp} keeping only the best five methods while comparing the total runtime and memory usage.

First, we observe that the tree based approaches ($\fastxml$, $\pfastrexml$ and $\RecallTree$) have worse predictive performance compared to the other methods. This is due to the fact that the prediction error made at the top-level of the tree cannot be corrected at lower levels, also known as cascading effect. Even though they have lower runtime and memory usage, they suffer from this side effect.

For large scale collections {\small (\textbf{WIKI-Small}, \textbf{WIKI-50K} and \textbf{WIKI-100K})}, the solvers with competitive predictive performance are $\OVA$, $\MSVM$, $\pdsparse$ and $(\pi,\kappa)$-$\mRb$. However, standard $\OVA$ and $\MSVM$ have a complexity that grows linearly with $K$ thus the total runtime and memory usage explodes on those datasets, making them impossible. For instance, on Wiki large dataset sample of 100K classes, the memory consumption of both approaches exceeds the Terabyte and they take several days to complete the training. Furthermore, on this data set and the second largest Wikipedia collection {\small(\textbf{WIKI-50K} and \textbf{WIKI-100K})} the proposed approach is highly competitive in terms of Time, Total Memory and both performance measures comparatively to all the other approaches. These results suggest that the method least affected by long-tailed class distributions is $(\pi,\kappa)$-$\mRb$, mainly because of two reasons: first, the sampling tends to make the training set balanced and second, the reduced binary dataset contains similar number of positive and negative examples. Hence, for the proposed approach, there is an improvement in both accuracy and \MaF\ measures. 

The recent $\pdsparse$ method also enjoys a competitive predictive performance but it requires to store intermediary weight vectors during optimization which prevents it from scaling well. The $\pdsparse$ solver provides an option for hashing leading to fewer memory usage during training which we used in the experiments; however, the memory usage is still significantly high for large datasets and at the same time this option slows down the training process considerably.

In overall, among the methods with competitive predictive performance, $(\pi,\kappa)$-$\mRb$ seems to present the best runtime and memory usage; its runtime is even competitive with most of tree-based methods, leading it to provide the best compromise among the compared methods over the three time, memory and performance measures.

\begin{figure}[!t]
    \centering
    \includegraphics[width =\textwidth]{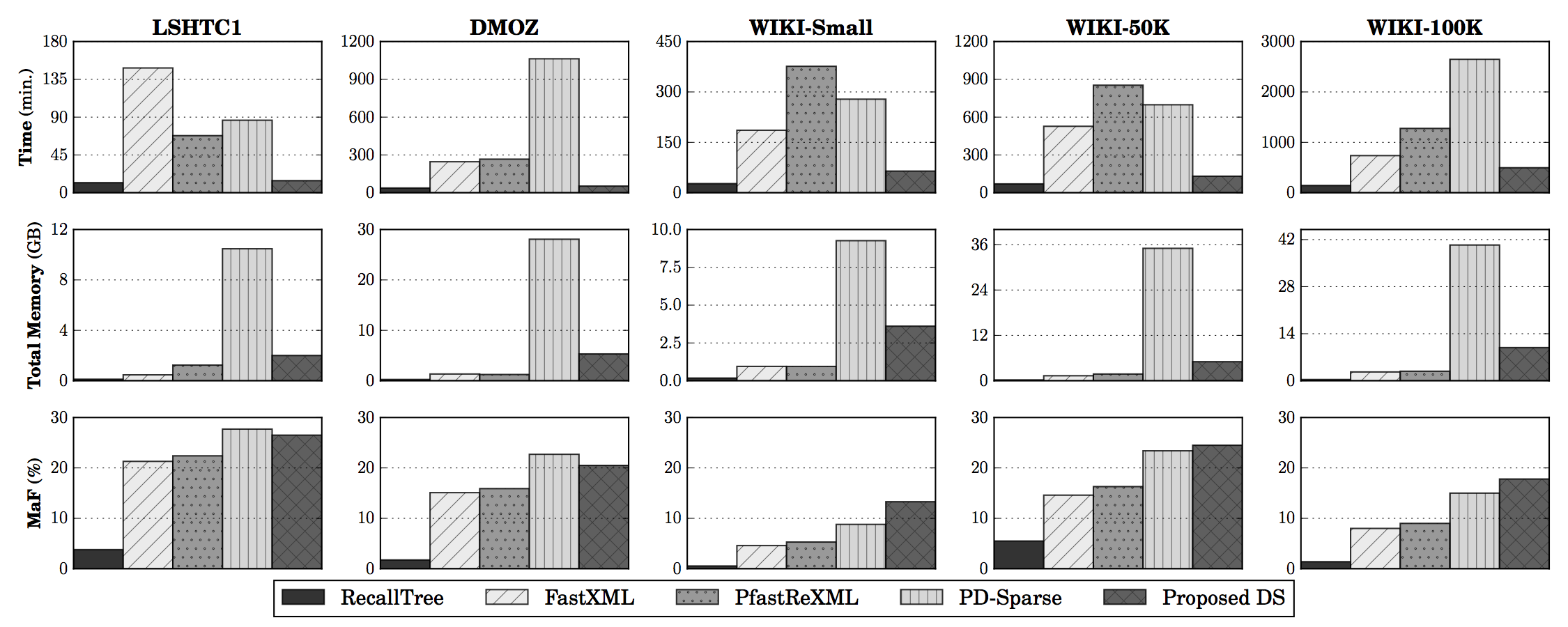}
    \caption{Comparisons in Total (Train and Test) Time (min.), Total Memory usage (GB), and \MaF\ of the five best performing methods on {\small \textbf{LSHTC1}, \textbf{DMOZ}, \textbf{WIKI-Small}, \textbf{WIKI-50K}} and {\small \textbf{WIKI-100K}}  datasets.}
    \label{fig:res_comp}
    \vspace{-8mm}
\end{figure}

\section{Conclusion}
\label{sec:conc}

We presented a new method for reducing a multiclass classification problem to binary classification. We employ similarity based feature representation for class and examples and a double sampling stochastic scheme for the reduction process. Even-though the sampling scheme shifts the distribution of classes and that the reduction of the original problem to a binary classification problem brings inter-dependency between the dyadic examples; we provide generalization error bounds suggesting that the Empirical Risk Minimization principle over the transformation of the sampled training set  still remains consistent.  Furthermore, the characteristics of the algorithm contribute for its excellent performance in terms of memory usage and total runtime and make  the proposed approach highly suitable for large class scenario.

\begin{bibunit}
{\small
\bibliographystyle{plain}   
\bibliography{m2binary_reduction}
}
\end{bibunit}

\newpage
\appendix
\newtheorem{innercustomthm}{Theorem}
\newtheorem*{theorem*}{Theorem}
\newenvironment{customthm}[1]
  {\renewcommand\theinnercustomthm{#1}\innercustomthm}
  {\endinnercustomthm}
  
  \section[Appendix A]{Theory Part}
\label{sec:app_theo}

%
%

\subsection{Technical Lemmas}

\bigskip 

\begin{lemma}
\label{lem:chromatic}
  Fractional chromatic number is monotone in graph inclusion: if $\graph = \langle \vertices_\graph, \edges_\graph\rangle \subseteq \graphH = \langle \vertices_\graphH, \edges_\graphH \rangle$ implies $\vertices_\graph \subseteq \vertices_\graphH$ and $\edges_\graph \subseteq \edges_\graphH$ we have $\chi^*(G) \le \chi^*(H)$.
\end{lemma}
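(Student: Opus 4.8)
The plan is to take an optimal exact proper fractional cover of the larger graph $\graphH$ and restrict it to the vertex set of $\graph$, then show that the restriction is itself an exact proper fractional cover of $\graph$ of no greater weight. Concretely, I would let $\cover_\graphH = \{(\Cset_k,\omega_k)\}_{k\in[J]}$ be an exact proper fractional cover of $\graphH$ attaining the minimum weight, so that $\Weight(\cover_\graphH)=\chi^*(\graphH)$, and define the restricted sets $\Cset_k' = \Cset_k\cap\vertices_\graph$, keeping the same weights to form $\cover_\graph = \{(\Cset_k',\omega_k)\}_{k\in[J]}$.

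The two conditions of Definition~\ref{def:chromatic} would then be checked in turn. For \emph{properness}, each $\Cset_k$ is independent in $\graphH$, i.e.\ contains no edge of $\edges_\graphH$; since $\Cset_k'\subseteq\Cset_k$ and $\edges_\graph\subseteq\edges_\graphH$, the set $\Cset_k'$ contains no edge of $\edges_\graph$ and is therefore independent in $\graph$. For the \emph{exact fractional cover} condition, I would use $\vertices_\graph\subseteq\vertices_\graphH$: for any $v\in\vertices_\graph$, membership $v\in\Cset_k'$ is equivalent to $v\in\Cset_k$ (as $v$ already lies in $\vertices_\graph$), so $\sum_{k:v\in\Cset_k'}\omega_k = \sum_{k:v\in\Cset_k}\omega_k = 1$, the last equality holding because $\cover_\graphH$ is an exact cover of $\graphH$. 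Hence $\cover_\graph$ is an exact proper fractional cover of $\graph$.

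Finally, since the weights are left unchanged, $\Weight(\cover_\graph)=\sum_{k\in[J]}\omega_k=\Weight(\cover_\graphH)=\chi^*(\graphH)$, and by definition $\chi^*(\graph)=\min_{\cover\in\covers(\graph)}\Weight(\cover)\le\Weight(\cover_\graph)=\chi^*(\graphH)$, which is the claim. Any empty restricted sets $\Cset_k'$ may simply be discarded, as this only lowers the weight and preserves both conditions. There is no real obstacle here: the argument is a one-line restriction construction, and the only point requiring a little care is verifying that restriction preserves both properness and exactness, which follows immediately from the edge inclusion $\edges_\graph\subseteq\edges_\graphH$ and the vertex inclusion $\vertices_\graph\subseteq\vertices_\graphH$ respectively.
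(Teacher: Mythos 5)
Your proposal is correct and follows essentially the same route as the paper's proof: restrict an exact proper fractional cover of $\graphH$ to $\vertices_\graph$, keep the weights, verify properness via $\edges_\graph\subseteq\edges_\graphH$ and exactness via $\vertices_\graph\subseteq\vertices_\graphH$, and conclude by minimality. The only cosmetic difference is that you restrict a single optimal cover of $\graphH$ while the paper restricts an arbitrary cover and then takes the minimum over all of them; both yield the same bound.
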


\bigskip
 
\begin{proof}
Consider any exact proper fractional cover \cite{Janson04RSA} of graph $\graphH$, $\cover_\graphH = \{(\cover_k, \omega_k)\}_{k\in J}$ for some index set~$J$. By removing from each $\cover_k$ vertices that belong to $\vertices_\graphH \setminus \vertices_\graph$ and incident edges we get a cover $\cover_\graph = \{(\cover_k', \omega_k)\}_{k \in J}$ of graph $\graph$. Once for a certain $k$ holds  $\cover_{k}' = \varnothing$ we remove it from $\cover_\graphH$ which is essentially the same as assignment~$\omega_k \doteq 0$.

The cover $\cover_\graph$ is a proper fractional cover of $\graph$ since the number of connections between vertices in $\cover_k'$ is a subset of those in $\cover_k$ for any $k \in J$. The cover $\cover_\graph$ is also exact (modulo empty sets in $\cover_\graphH$) since for any $v$~:
\[
    v \in \vertices_\graphH \cap \vertices_\graph:\quad \sum_{k: v\in \cover_k'} \omega_k = \sum_{k: v\in \cover_k} \omega_k = 1,
\] 
where $\cover_\graphH = \{(\cover_k, \omega_k)\}_{k\in J}$ is an exact proper fractional cover of graph $\graphH$. That implies that each exact proper fractional cover $\cover_\graphH$ of graph $\graphH$ can be converted to an exact proper fractional cover $\cover_\graph$ of graph $\graph$ without increasing the covering cost $\Weight(\cover_\graph) \doteq \sum_{\cover_k \in \cover_\graph} \omega_k \le \Weight(\cover_\graphH)$. Denote the set of all exact proper fractional coverings of graph $\graph$ as $\covers(\graph)$ and coverings obtained by pruning $\covers(\graphH)$ as above through $\covers_\graphH(\graph)$.

By the definition of fractional chromatic number we have 
\begin{gather*}
\chi^*(\graph) {=} \min_{\cover \in \covers(\graph)} \Weight(\cover) \stackrel{(1)}{\leq} \min_{\cover \in \covers_\graphH(\graph)} \Weight(\cover) \leq \chi^*(\graphH),
\end{gather*}
where $(1)$ is implied by inclusion $\covers_\graphH(\graph) \subseteq \covers(\graph)$.
\end{proof}

\bigskip

\begin{lemma}[Empirical Bennet inequality, theorem 4 of \cite{maurer2009empirical}]
\label{lem:bennet}
Let $Z_1, Z_2, \dots, Z_n$ be i.i.d. variables with values in $[0,1]$ and let $\delta > 0$. Then with probability at least $1-\delta$ in $\mathbf{Z} = (Z_1, Z_2, \dots, Z_n)$ we have 
\[\expectation[\mathbf{Z}] - \frac{1}{n}\sum_{i=1}^n Z_i \le \sqrt{\frac{2 \var_n(\mathbf{Z}) \log 2/\delta}{n}} + \frac{7\log 2/\delta}{3(n-1)},\]
where $\var_n(\mathbf{Z})$ is the sample variance 
\[V_n(\mathbf{Z}) = \frac{1}{n(n-1)}\sum_{1\le i < j \le n} (Z_i - Z_j)^2.\]
\end{lemma}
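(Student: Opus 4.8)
The plan is to reproduce the two-ingredient structure underlying Theorem~4 of \cite{maurer2009empirical}, of which this lemma is a restatement: I would combine a classical concentration bound for the mean that involves the \emph{true} variance with a high-probability comparison of the true variance to the sample variance $\var_n(\mathbf Z)$, and then glue the two together by a union bound. The appearance of $2/\delta$ inside both logarithms, and of $n-1$ in the residual term, are exactly the fingerprints of this union bound and of the unbiased normalization of $\var_n$, so I would expect the constants $\tfrac12$ and $\tfrac73$ to fall out of the final arithmetic once the two pieces are assembled.

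For the first ingredient I would write $\sigma^2 = \var[Z_1]$ and apply the Cram\'er--Chernoff method to the i.i.d. bounded variables $Z_i\in[0,1]$; this yields Bennett's inequality, whose standard Bernstein relaxation gives, with probability at least $1-\delta/2$,
\[
    \expectation[\mathbf Z] - \frac1n\sum_{i=1}^n Z_i \;\le\; \sqrt{\frac{2\sigma^2\log\frac{2}{\delta}}{n}} \;+\; \frac{\log\frac{2}{\delta}}{3n},
\]
where $\expectation[\mathbf Z]=\expectation[Z_1]$ is the common mean. This step is routine; the only care needed is to keep the one-sided (lower-tail of the empirical mean) form so that the fluctuation enters through $\sigma^2$.

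The hard part is the second ingredient: showing that with probability at least $1-\delta/2$,
\[
    \sigma \;\le\; \sqrt{\var_n(\mathbf Z)} \;+\; \sqrt{\frac{2\log\frac{2}{\delta}}{n-1}}.
\]
The delicate point is to control the \emph{standard deviation} at the $n^{-1/2}$ scale uniformly over distributions, including the degenerate regime $\sigma\to 0$. Two tempting shortcuts fail. A plain bounded-differences (McDiarmid) bound applied to $\mathbf z\mapsto\sqrt{\var_n}$ only delivers an $O(1)$ fluctuation, because although each coordinate perturbs $\sqrt{\var_n}$ by at most $\sqrt{2/n}$, the $n$ increments add in quadrature to a constant; and concentrating $\var_n$ itself as a bounded U-statistic and then taking a square root loses a power through $\sqrt{a+b}\le\sqrt a+\sqrt b$, giving only an $n^{-1/4}$ correction. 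The fix is to exploit that $\sqrt{\var_n}=\tfrac{1}{\sqrt{n-1}}\lVert\mathbf Z-\bar Z\mathbf 1\rVert$, with $\bar Z\doteq\tfrac1n\sum_{i=1}^n Z_i$, is a \emph{self-bounding} (variance-sensitive) function whose downward fluctuations are controlled by its own magnitude; feeding this into a self-bounding / Bernstein-type bounded-difference inequality recovers the correct $n^{-1/2}$ rate, after using $\expectation\sqrt{\var_n}\le\sigma$ (Jensen) to pin the centering.

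Finally I would combine the two. Since $\sqrt{2\sigma^2\log(2/\delta)/n}\le\sqrt{2\log(2/\delta)/n}\,\sigma$, substituting $\sigma\le\sqrt{\var_n}+\sqrt{2\log(2/\delta)/(n-1)}$ produces the main term $\sqrt{2\var_n\log(2/\delta)/n}$ together with a cross term of order $\log(2/\delta)/(n-1)$; collecting this cross term with the $\log(2/\delta)/3n$ residual from the first ingredient, and taking a union bound over the two failure events (each of probability $\delta/2$), yields the stated bound with residual $\tfrac{7}{3}\log(2/\delta)/(n-1)$. The only genuine obstacle is the variance comparison; the remaining steps are bookkeeping of constants.
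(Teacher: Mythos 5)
The paper does not actually prove this lemma: it is imported verbatim as Theorem~4 of Maurer and Pontil's \emph{Empirical Bernstein bounds and sample variance penalization}, so there is no in-paper argument to compare against. Your sketch correctly reconstructs the architecture of the proof in that reference: a one-sided Bernstein bound in terms of the true variance $\sigma^2$ at confidence $\delta/2$, a high-probability comparison $\sigma \le \sqrt{\var_n(\mathbf Z)} + \sqrt{2\log(2/\delta)/(n-1)}$ at confidence $\delta/2$, and a union bound; your constant bookkeeping is right, since $\sqrt{2t/n}\cdot\sqrt{2t/(n-1)} \le 2t/(n-1)$ and $t/(3n) \le t/(3(n-1))$ sum to the stated $7t/(3(n-1))$. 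You also correctly diagnose why the naive routes (McDiarmid on $\sqrt{\var_n}$, or concentrating $\var_n$ and taking square roots) lose the $n^{-1/2}$ rate.

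The one soft spot is the crux itself, which you name but do not execute, and where your parenthetical points the wrong way. The inequality you need is an \emph{upper} bound on $\sigma = \sqrt{\expectation[\var_n]}$ in terms of the observed $\sqrt{\var_n}$; Jensen gives $\expectation\sqrt{\var_n} \le \sqrt{\expectation[\var_n]} = \sigma$, so concentrating $\sqrt{\var_n}$ about $\expectation\sqrt{\var_n}$ and ``pinning the centering'' with Jensen produces a bound on $\expectation\sqrt{\var_n}$, not on $\sigma$, i.e.\ the inequality faces the wrong direction. The correct route (Theorem~10 of the cited reference) is to observe that $W \doteq (n-1)\var_n(\mathbf Z) = \sum_i (Z_i-\bar Z)^2$ is a self-bounding function of the $Z_i\in[0,1]$, apply the lower-tail entropy-method bound $\proba\bigl(\expectation[W]-W > \sqrt{2t\,\expectation[W]}\bigr)\le e^{-t}$, and then solve the resulting quadratic in $\sqrt{\expectation[W]}$ to get $\sqrt{\expectation[W]} \le \sqrt{W} + \sqrt{2t}$, which after dividing by $\sqrt{n-1}$ is exactly the needed comparison. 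With that substitution your outline becomes a complete and correct proof of the lemma.
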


%
%
%
%

\bigskip
 
 \begin{lemma}[Concentration of Fractionally Sub-Additive Functions, theorem 3 of \cite{RalaivolaAmini15}]
	\label{th:dependenttalagrand}
Let $\Fset$ be a set of functions from $\Xset$ to $\realset$ and assume all functions in $\Fset$ are measurable, square-integrable and 
satisfy~$\expectation[f(X_n)]=0,\forall n\in[N]$ and $\sup_{f\in\Fset}\|f\|_{\infty}\leq 1.$
Assume that $\cover=\{(\Cset_k,\omega_k)\}_{k \in J}$ is a cover of
the dependency graph of $X_{[N]}$ and let $\chi_f\doteq\sum_k{\omega_k}.$

Let us define: 
\[
Z\doteq\sum_{k\in[J]}\omega_k\sup_{f\in \Fset}\sum_{n\in\Cset_k}{f(X_n)}
\]
Let $\sigma_k$ be so that $\sigma^2_k= \sum_{n\in\Cset_k}\sup_{f\in\Fset}\expectation \left[f^2(X_n)\right]$,  $v\doteq\sum_k\omega_k\sigma_k^2+2\expectation [Z]$, and $c\doteq 25\chi_f/16$. Then, for any $t\geq 0$.
\begin{equation}
\proba\left(Z\geq \expectation [Z] + \sqrt{2cvt}+\frac{ct}{3}\right)\leq e^{-t}
\end{equation}
\end{lemma}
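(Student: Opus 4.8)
The plan is to exploit that the cover $\cover=\{(\Cset_k,\omega_k)\}$ is \emph{proper}, so that inside each block the summands are independent, reducing the problem to a standard concentration inequality \emph{per block}, and then to glue the blocks together through the fractional weights by a generalized H\"older inequality. Write $Z=\sum_k\omega_k Z_k$ with $Z_k\doteq\sup_{f\in\Fset}\sum_{n\in\Cset_k}f(X_n)$. Since $\Cset_k$ is an independent set of the dependency graph, the variables $\{X_n\}_{n\in\Cset_k}$ are mutually independent, so each $Z_k$ is a supremum of a sum of independent, centered, $[-1,1]$-bounded functions, with weak variance $\sigma_k^2=\sum_{n\in\Cset_k}\sup_{f\in\Fset}\expectation[f^2(X_n)]$ (measurability of the suprema being part of the standing hypotheses).

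First I would invoke the Talagrand/Bousquet concentration inequality for suprema of independent empirical processes at the level of exponential moments: for each $k$ and each admissible $\lambda>0$,
\[
\log\expectation\,e^{\lambda(Z_k-\expectation Z_k)}\le \frac{c_0\, v_k\,\lambda^2}{2\,(1-\tfrac{c_0}{3}\lambda)},\qquad v_k\doteq\sigma_k^2+2\expectation Z_k,
\]
i.e.\ $Z_k-\expectation Z_k$ is sub-gamma with variance factor $c_0 v_k$ and scale $c_0/3$, where $c_0=25/16$ is the constant of the version used. This is the only place where independence, hence properness of the cover, is needed, and it is a cited result rather than something I would reprove.

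Next I would aggregate the blocks. Since $\sum_k\omega_k=\chi_f$, the normalized weights $\theta_k\doteq\omega_k/\chi_f$ form a probability vector, so writing $e^{\lambda Z}=\prod_k\big(e^{\lambda\chi_f Z_k}\big)^{\theta_k}$ the generalized H\"older inequality (valid for arbitrarily dependent nonnegative factors, which is why overlapping blocks cause no difficulty) together with the cancellation of the centering term yields
\[
\log\expectation\,e^{\lambda(Z-\expectation Z)}\le\sum_k\theta_k\,\log\expectation\,e^{\lambda\chi_f(Z_k-\expectation Z_k)}\le \frac{c_0\,\chi_f\,v\,\lambda^2}{2\,(1-\tfrac{c_0}{3}\chi_f\lambda)},
\]
where I used $\expectation Z=\sum_k\omega_k\expectation Z_k$ and $\sum_k\omega_k v_k=\sum_k\omega_k\sigma_k^2+2\expectation Z= v$. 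The display exhibits $Z-\expectation Z$ as sub-gamma with variance factor $cv$ and scale $c/3$, with $c=c_0\chi_f=25\chi_f/16$; the standard Cram\'er--Chernoff computation for sub-gamma variables then gives $\proba(Z\ge\expectation Z+\sqrt{2cvt}+ct/3)\le e^{-t}$, which is the claim.

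The conceptual content lies entirely in the interplay of the two independence regimes: within blocks for the Talagrand inequality, and the dependence-free H\"older aggregation across blocks weighted by the fractional cover. The main obstacle is bookkeeping the constants so that the per-block inequality produces exactly the factor $25/16$ and its scale $25/48=c_0/3$; once that constant is fixed, the $\chi_f$ dependence of $c$ falls out automatically from the rescaling $\lambda\mapsto\chi_f\lambda$ forced by the normalization $\theta_k=\omega_k/\chi_f$.
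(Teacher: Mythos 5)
The paper does not actually prove this lemma: it is imported verbatim as Theorem~3 of \cite{RalaivolaAmini15}, so there is no internal proof to compare against, and the comparison must be with the cited source. Your sketch is correct and takes a genuinely different route from that source. Ralaivola and Amini obtain the result by the entropy method: a fractional sub-additivity property of entropy over the cover, a modified log-Sobolev inequality \`a la Boucheron--Lugosi--Massart extended to dependency graphs, and a Herbst-type integration of the resulting differential inequality; the constant $25/16$ is an artifact of that entropy computation. You instead modularize: a per-block Bennett/Bousquet inequality at the exponential-moment level (legitimate, since \emph{properness} of the cover makes $\{X_n\}_{n\in\Cset_k}$ independent within each block --- the only place independence is used, as you say), followed by Janson's H\"older aggregation $\expectation\bigl[\prod_k U_k^{\theta_k}\bigr]\le\prod_k\bigl(\expectation U_k\bigr)^{\theta_k}$ with $\theta_k=\omega_k/\chi_f$, which is insensitive to overlaps and dependence across blocks. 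Your bookkeeping checks out: since $Z$ is \emph{defined} as $\sum_k\omega_k Z_k$, the centering telescopes exactly, $\sum_k\omega_k v_k=\sum_k\omega_k\sigma_k^2+2\expectation Z=v$, and the forced rescaling $\lambda\mapsto\chi_f\lambda$ yields $c=c_0\chi_f$; note also that Bousquet's variance proxy $\sup_{f}\sum_{n\in\Cset_k}\expectation f^2(X_n)$ is dominated by the lemma's $\sigma_k^2=\sum_{n\in\Cset_k}\sup_f\expectation f^2(X_n)$, and the sub-gamma bound is monotone in the variance factor, so this substitution is harmless. One attribution to fix: there is no standard ``version'' of the per-block inequality with constant $c_0=25/16$; Bousquet's 2002 theorem gives the MGF bound with $c_0=1$, i.e. $\log\expectation e^{\lambda(Z_k-\expectation Z_k)}\le v_k\lambda^2/\bigl(2(1-\lambda/3)\bigr)$ for $0<\lambda<3$. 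Since enlarging $c_0$ only weakens the bound on the correspondingly smaller range of $\lambda$, your argument in fact proves the lemma with the sharper constant $c=\chi_f$, and the stated $c=25\chi_f/16$ follows a fortiori. What the entropy-method proof of \cite{RalaivolaAmini15} buys is a framework covering general fractionally self-bounding functionals beyond suprema of empirical processes; what your route buys is a shorter, modular proof (and a better constant) once Bousquet is taken off the shelf.
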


Let below $\Dist$ be a probability measure over $\Input\times \Output$. It can be decomposed into a direct product of $\Dist = \Dist_\cY \times \Dist_{\Input | \cY}$ with marginal distribution $\Dist_\cY$ over $\cY$ and conditional $\Dist_{\Input | \cY}$ over $\Input$. Let $\bar \Dist = {\bar \Dist}_\cY \times \Dist_{\Input | \cY}$ be a measure properly renormalized in accordance with the algorithm, e.g.~$\prob_{y \sim \bar\Dist}[y(\obs) = y] = \pi_y/\pi$, where $\pi = \sum_{i=y}^K \pi_y$. 

%

\bigskip

\begin{lemma}
\label{lem:main}
 Let $\TrnSet=(\obs_i^{y_i})_{i=1}^m\in (\Input\times \Output)^m$ be a dataset of $m$ examples drawn i.i.d. according to a probability measure $\Dist = \Dist_\cY \times \Dist_{\Input | \cY}$ over $\Input\times \Output$ and $T(\TrnSet)=((\bfZ_i,\tilde y_i))_{i=1}^N$ the transformed set obtained with the transformation function $T$ defined in (Eq.~(5)).  
 Let $\bar \Dist = \bar \Dist_\cY \times \Dist_{\Input | \cY}$, $\prob_{\bar \Dist_\cY}[y(\obs) = k] = \pi_k/\pi$, $1\leq i\leq K$, be a measure over $\Input\times \Output$ used in the $(\pi,\kappa)$-\mRb\ algorithm.
 %
 With the class of functions $\mathcal{G}=\{g: \Input \times \Output \rightarrow \mathbb{R} \}$ and $\mathcal H=\{h: h(\phi(\obs^y),\phi(\obs^{y'}))=g(\obs^{y})-g(\obs^{y'}), g\in\mathcal{G}\}$ for any $\delta > 0$ for all $h \in {\cal H}$ with probability at least $1-\delta$  we have~:
\[
    R(h) \leq 
        \alpha\expectation_{\obs^y \sim \bar \Dist}\EmpRisk{T(\obs^y)}(h) +
        \sqrt{\frac{2 \alpha\log 2K/\delta}{\beta(m-1)}} + 
        \frac{7\beta \log 2K/\delta}{3(m-1)}.
\]
holds  the for all $h\in \mathcal \mathcal{H}$, where $\ell:\{-1,+1\}\times\mathbb{R}\rightarrow [0,1]$ is the $0/1$ loss, and 
$\alpha  = \max_{y:\, 1\leq y \leq K} \pi\eta_y/\pi_y$, and $\beta = \max_{y:\, 1\leq y \leq K} \pi/\pi_y$, 
and $\eta_y > 0$ is the proportion the class $y$ in the training set $\TrnSet$.
\end{lemma}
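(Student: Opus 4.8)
The plan is to isolate the only genuine source of distribution mismatch here — the class marginal — and to pay for replacing the true marginal by its empirical counterpart through a single concentration step. First I would condition on the label. Since $\bar\Dist=\bar\Dist_\cY\times\Dist_{\Input\mid\cY}$ shares the conditional $\Dist_{\Input\mid\cY}$ with $\Dist$ and only reweights the marginal, both sides of the claimed inequality are mixtures of the \emph{same} per-class conditional risks $R_y(h)\doteq\expectation_{\obs\sim\Dist_{\Input\mid y}}[e(g,\obs^y)]\in[0,1]$. Noting that $\EmpRisk{T(\obs^y)}(h)=e(g,\obs^y)$ collapses to the per-example ranking loss, one gets $R(h)=\sum_{y}\Dist_\cY(y)\,R_y(h)$ and $\expectation_{\obs^y\sim\bar\Dist}\EmpRisk{T(\obs^y)}(h)=\sum_{y}(\pi_y/\pi)R_y(h)$. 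A change of measure then yields $R(h)=\sum_y \frac{\pi\Dist_\cY(y)}{\pi_y}\cdot\frac{\pi_y}{\pi}R_y(h)\le \big(\max_y \tfrac{\pi\Dist_\cY(y)}{\pi_y}\big)\,\expectation_{\obs^y\sim\bar\Dist}\EmpRisk{T(\obs^y)}(h)$, so the entire question reduces to controlling the single scalar importance weight $\max_y \pi\Dist_\cY(y)/\pi_y$ and comparing it to $\alpha=\max_y \pi\eta_y/\pi_y$.

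Next I would replace the true marginal $\Dist_\cY(y)$ by the empirical proportion $\eta_y$. For each fixed class the indicators $(\Ind{y_i=y})_{i=1}^m$ are i.i.d.\ in $[0,1]$ with mean $\Dist_\cY(y)$ and empirical mean $\eta_y$, so the empirical Bennett inequality (Lemma~\ref{lem:bennet}) applies, with sample variance $\tfrac{m}{m-1}\eta_y(1-\eta_y)$, giving $\Dist_\cY(y)\le \eta_y+\Delta_y$ with $\Delta_y$ built from a $\sqrt{\eta_y\log(1/\delta')/(m-1)}$ term and a $\log(1/\delta')/(m-1)$ term. A union bound over the $K$ classes at level $\delta'=\delta/K$ makes these hold simultaneously and turns $\log(1/\delta')$ into $\log(2K/\delta)$. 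Substituting into the importance weight gives $\max_y \tfrac{\pi\Dist_\cY(y)}{\pi_y}\le \alpha+\max_y \tfrac{\pi\Delta_y}{\pi_y}$; bounding $\pi/\pi_y\le\beta$ on the linear part of $\Delta_y$ produces the term $\tfrac{7\beta\log(2K/\delta)}{3(m-1)}$, pairing $\pi/\pi_y$ with $\eta_y$ via $\pi\eta_y/\pi_y\le\alpha$ inside the square root produces the variance term, and multiplying this correction by $\expectation_{\obs^y\sim\bar\Dist}\EmpRisk{T(\obs^y)}(h)\le1$ detaches it from the leading $\alpha\expectation_{\obs^y\sim\bar\Dist}\EmpRisk{T(\obs^y)}(h)$.

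The delicate point is the \emph{combination}, not any individual inequality. The tempting route — bounding $\sum_y(\Dist_\cY(y)-\eta_y)R_y(h)$ by summing the per-class Bennett deviations — is fatal, since the additive $\log(2K/\delta)/(m-1)$ pieces would accumulate a spurious factor $K$. The crux is therefore to factor the worst-case importance weight $\max_y \pi\Dist_\cY(y)/\pi_y$ out \emph{before} invoking concentration, so that only one scalar deviation must be controlled and the $K$ classes enter only logarithmically through the union bound. Collapsing the variance proxy to exactly $\alpha/\beta$ (rather than the cruder bound a careless estimate of $(\pi/\pi_y)^2\eta_y$ would give) is the remaining bookkeeping subtlety, and is precisely where the pairing of the weight $\pi/\pi_y$ with the empirical mass $\eta_y$ must be exploited. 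I would finally emphasize that this lemma accounts \emph{only} for the marginal shift: the inter-dependence among dyadic pairs and the $\kappa$-subsampling of adversarial classes are controlled separately by the fractional-Rademacher/Talagrand machinery behind Theorem~\ref{thm:MulticlassRedBinaryGenBound}, and the two are assembled to yield Theorem~\ref{thm:SlowRatesAp}.
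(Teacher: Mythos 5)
Your proposal matches the paper's proof essentially step for step: the same class-conditional decomposition of $R(h)$ and of $\expectation_{\obs^y\sim\bar\Dist}\EmpRisk{T(\obs^y)}(h)$ into mixtures of the common per-class conditional risks, the same factoring of the worst-case importance weight $\max_y \pi\Dist_\cY(y)/\pi_y$ out \emph{before} invoking any concentration, and the same per-class empirical Bennett bound combined with a union bound at level $\delta/K$. One remark: the ``bookkeeping subtlety'' you flag in the square-root term is not actually resolved by the pairing you describe --- since $(\pi/\pi_y)^2\eta_y=(\pi\eta_y/\pi_y)\cdot(\pi/\pi_y)\le\alpha\beta$, the natural bound puts $\beta$ in the numerator rather than the denominator --- but the paper's own derivation of the bound on the importance weight makes the identical leap, so this is a blemish shared with the paper rather than a gap in your argument relative to it.
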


\bigskip
 
\begin{proof}
First, decompose the expected risk $R(h)$ as a sum of the conditional risks over the classes
\begin{align}
\label{relAp:01}
    R(h) &=\expectation_{\obs^y \sim \mathcal{D}}[e_h(\obs^y)] \stackrel{(1)}{=} \expectation_{y \sim \mathcal{D}_\cY}\expectation_{\obs^y \sim \mathcal{D} | y(\obs) = y}[e_h(\obs^y) | y(\obs) = y]  
    \nonumber\\ 
    &\stackrel{(2)}{=}\sum_{y = 1}^K\prob_{\obs^y \sim \mathcal{D}} [y(\obs) = y] \cdot\expectation_{\obs^y \sim \mathcal{D} | y(\obs) = y} \left[e_h(\obs^y)| y(\obs) = y\right], 
\end{align}
where $(1)$ and $(2)$ are due to the law of total expectation. 


Similarly consider the expected loss $\expectation_{\obs^y \sim \bar{\cal D}}\EmpRisk{T(\obs^y)}(h)$~:
\begin{align}
\label{relAp:02}
\expectation_{\obs^y \sim \bar{\cal D}}\EmpRisk{T(\obs^y)}(h) &=\expectation_{\obs^y \sim \bar {\cal D}}[e_h(\obs^y)] 
\stackrel{(1)}{=} 
\expectation_{y \sim \bar{\cal D}_\cY}\expectation_{\obs^y \sim \mathcal{D}_{\Input|\Output}, y(\obs) = y}[e_h(\obs^y) | y(\obs) = y] 
    \nonumber\\ 
  & \stackrel{(2)}{=} \sum_{y = 1}^K\prob_{\obs^y \sim \bar{\cal D}} [y(\obs) = y] \cdot\expectation_{\obs^y \sim \mathcal{D} | y(\obs) = y} \left[e_h(\obs^y)| y(\obs) = y\right]  \nonumber\\
&=    \sum_{y = 1}^K \frac{\pi_y}{\pi} \cdot\expectation_{\obs^y \sim \mathcal{D} | y(\obs) = y} \left[e_h(\obs^y)| y(\obs) = y\right], 
\end{align}
where $(1)$ and $(2)$ are also due to the law of total expectation.


From \eqref{relAp:01} and \eqref{relAp:02} we conclude
\begin{equation}
\label{relAp:03}
    R(h) \leq \max_{y: \, 1\leq y\leq K} \frac{\prob_{\obs^y \sim {\cal D}}[y(\obs) = y]}{\pi_y/\pi} \cdot \expectation_{\obs^y \sim {\bar {\cal D}}} \EmpRisk{T(\obs^y)}(h)
\end{equation}

Finally, we need to bound the multiplier in front of $\expectation_{\obs^y \sim {\bar {\cal D}}} \EmpRisk{T_\kappa(\TrnSet_\pi)}(h)$ in Eq.~\eqref{relAp:03}. Denote through~$\eta_y$ an empirical probability of the class $y \in \cY$~:
\[\eta_y = \frac{1}{m}\sum_{\obs \in \TrnSet} \Ind{y(\obs) = y}.\]
Note, that empirical variance $\var_n(\eta_y)$ in accordance with lemma~\ref{lem:bennet} is 
\[\var_m(\eta_y) = \frac{\eta_y(1-\eta_y)m}{(m-1)}\]
For any $y\in \cY$ we have with probability at least $1-\delta/K$ by lemma~\ref{lem:bennet}~: 
\begin{eqnarray}
  \prob_{\obs^y \sim {\cal D}}[y(\obs) = y] \stackrel{}{\leq} \eta_y + \sqrt{\frac{2\var_m(\eta_y)\log 2K/\delta}{m}} + \frac{7\log 2K/\delta}{3(m-1)} \stackrel{(1)}{=} 
  \nonumber\\
  \eta_y + \sqrt{\frac{2\eta_y(1-\eta_y)\log 2K/\delta}{m-1}} + \frac{7\log 2K/\delta}{3(m-1)} 
  \stackrel{(2)}{\leq} \nonumber\\
  \eta_y + \sqrt{\frac{2\eta_y\log 2K/\delta}{m-1}} + \frac{7\log 2K/\delta}{3(m-1)},\nonumber
\end{eqnarray}
where $(1)$ is a substitution of $\var_m(\eta_y)$ by its explicit value; $(2)$ is due to the fact that $0 < \eta_y \leq 1$.

Then simultaneously for all $y\in \cY$ we have with probability at least $1-\delta$~:
\begin{equation*}
\prob_{\obs^y \sim {\cal D}}[y(\obs) = y] \leq \eta_y + \sqrt{\frac{2\eta_y\log 2K/\delta}{m-1}} + \frac{7\log 2K/\delta}{3(m-1)}
\end{equation*}

Thus with probability at least $1-\delta$~:
\begin{equation}
\label{relAp:04}
\max_{y: \, 1\leq y\leq K} \frac{\prob_{\obs^y \sim {\cal D}}[y(\obs) = y]}{\pi_y/\pi} \le \alpha + \sqrt{\frac{2 \alpha\log 2K/\delta}{\beta(m-1)}} + \frac{7\beta \log 2K/\delta}{3(m-1)},
\end{equation}
with 
\[\alpha  = \max_{y:\, 1\leq y \leq K} \frac{\pi\eta_y}{\pi_y}, \quad \beta = \max_{y:\, 1\leq y \leq K} \frac{\pi}{\pi_y}\]

From equations \eqref{relAp:03} and \eqref{relAp:04} and the fact that $\expectation_{\obs^y \sim {\bar {\cal D}}^N}\EmpRisk{T(\obs^y)}(h) \leq 1$, we have with probability at least $1-\delta$~:
\[
R(h) \leq \alpha\expectation_{\obs^y \sim \bar {\cal D}}\EmpRisk{T(\obs^y)}(h)  + \sqrt{\frac{2 \alpha\log 2K/\delta}{\beta(m-1)}} + \frac{7\beta \log 2K/\delta}{3(m-1)}.
\]
\end{proof}

\bigskip
 
\subsection{Proofs}
The results of the previous section entail hence the following lemma.

\bigskip

 \begin{lemma}
	\label{lem:MulticlassRedBinaryGenBound}
        Let $\TrnSet=(\obs_i^{y_i})_{i=1}^m\in (\Input\times \Output)^m$ be a dataset of $m$ examples drawn i.i.d. according to a probability distribution $\Dist$ over $\Input\times \Output$ and $T_\kappa(\TrnSet)=((\bfZ_i,\tilde y_i))_{i=1}^{m\kappa}$ the transformed set obtained as in Eq.~(5) and draw $\kappa$ adversarial samples by algorithm $(\pi,\kappa)$-\mRb. 
        With the class of functions $\mathcal{G}=\{g: \Input \times \Output \rightarrow \mathbb{R} \}$ and $\mathcal H=\{h: h(\phi(\obs^y),\phi(\obs^{y'}))=g(\obs^{y})-g(\obs^{y'}), g\in\mathcal{G}\}$, consider the parameterized family $\Fset_{r}$ which, for $r>0$, is defined as~:
        \[
            \Fset_{r}=\{h:h\in\Fset,\mathbb{V}[h]\doteq\mathbb{V}_{\bfZ,\tilde{y}}[\Ind{\tilde{y}h(\bfZ)}]\leq r\},
        \]
        where $\mathbb{V}$ denotes the variance. 
        Then for any $\delta>0$ and $0/1$ loss $\ell:\{-1,+1\}\times\mathbb{R}\rightarrow [0,1]$, 
        with probability at least~$(1-\delta)$ the following generalization bound holds for all $h\in \mathcal \Fset_{r}$~:
        \vspace{-1mm}
        \begin{multline*}
            R(h)\leq 
                \EmpRisk{T_\kappa(\TrnSet)}(h) + \rademacher_{T_\kappa(\TrnSet)}(\ell\circ\Fset_r) + \\ 
                \frac{5}{2}\left(\sqrt{\rademacher_{T_\kappa(\TrnSet)}(\ell\circ\Fset_r)} +
                \sqrt{\frac{r}{2}}\right)\sqrt{\frac{(K-1)\log 1/\delta}{m\kappa}} +
                \frac{25}{48}\frac{\log 1/\delta}{m}.
        \end{multline*}
        \vspace{-2mm}
\end{lemma}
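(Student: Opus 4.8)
The plan is to replay the argument behind Theorem~\ref{thm:MulticlassRedBinaryGenBound} almost verbatim, with the full transformation $T(\TrnSet)$ replaced throughout by the subsampled transformation $T_\kappa(\TrnSet)$. Since taking $\kappa=K-1$ recovers exactly the statement of Theorem~\ref{thm:MulticlassRedBinaryGenBound} (note $\sqrt{(K-1)/(m(K-1))}=\sqrt{1/m}$, and $T_{K-1}=T$), what follows should be a strict generalization, and the only quantities that change are the total number of dyadic pairs, now $N=m\kappa$, and the cover of the dependency graph.

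First I would pin down the dependency structure of $T_\kappa(\TrnSet)$. Each original example $\obs_i^{y_i}$ gives rise to exactly $\kappa$ pairs, all sharing the factor $\phi(\obs_i^{y_i})$, so these are mutually dependent, whereas pairs coming from distinct (i.i.d.) examples are independent; hence the dependency graph is a disjoint union of $m$ cliques. Because $T_\kappa(\TrnSet)\subseteq T(\TrnSet)$, its dependency graph is a subgraph of the one associated with $T(\TrnSet)$ (Figure~\ref{fig:DepGraph}), so Lemma~\ref{lem:chromatic} yields $\chi^*(\graph_{T_\kappa})\le\chi^*(\graph_{T})=K-1$; this is where the factor $K-1$ in the bound originates. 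I would fix the exact proper fractional cover $\cover=\{(\Cset_k,\omega_k)\}$ inherited from the full graph, with weight $\Weight(\cover)\le K-1$, each of the $N=m\kappa$ pairs lying in exactly one independent set.

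Next I would set up the centered empirical process
\[
    Z=\sup_{h\in\Fset_r}\sum_k\omega_k\sum_{\alpha\in\Cset_k}\bigl(\expectation[\ell(\tilde y_\alpha,h(\bfZ_\alpha))]-\ell(\tilde y_\alpha,h(\bfZ_\alpha))\bigr),
\]
and control its three ingredients: (i) $\expectation[Z]$ via symmetrization, which introduces the fractional Rademacher complexity $\rademacher_{T_\kappa(\TrnSet)}(\ell\circ\Fset_r)$; (ii) the variance parameters $\sigma_k^2=\sum_{\alpha\in\Cset_k}\sup_h\expectation[\ell^2]$, bounded through the defining constraint $\mathbb{V}[h]\le r$ of $\Fset_r$, giving $\sum_k\omega_k\sigma_k^2\le N r=m\kappa r$; and (iii) with $c=25\,\Weight(\cover)/16$, Lemma~\ref{th:dependenttalagrand} applied at $t=\log(1/\delta)$ gives, with probability $1-\delta$, a deviation of the form $\tfrac1N\bigl(\sqrt{2cvt}+\tfrac{ct}{3}\bigr)$ with $v=\sum_k\omega_k\sigma_k^2+2\expectation[Z]$. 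Dividing by $N=m\kappa$, substituting $c$, and splitting $\sqrt{a+b}\le\sqrt a+\sqrt b$ exactly as in Theorem~\ref{thm:MulticlassRedBinaryGenBound} converts $\sqrt{2cvt}/N$ into $\tfrac52\bigl(\sqrt{\rademacher_{T_\kappa(\TrnSet)}(\ell\circ\Fset_r)}+\sqrt{r/2}\bigr)\sqrt{(K-1)\log(1/\delta)/(m\kappa)}$, and $ct/(3N)$ into the residual additive term.

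The main obstacle is the bookkeeping between the two sampling scales: the number of pairs is $N=m\kappa$, yet the fractional chromatic number driving Lemma~\ref{th:dependenttalagrand} is governed by the full class count through $\chi^*(\graph_{T_\kappa})\le K-1$ (Lemma~\ref{lem:chromatic}), and it is precisely this mismatch that produces the $(K-1)/\kappa$ inflation in the deviation term. One must carry $\Weight(\cover)$ and the variance sum $\sum_k\omega_k\sigma_k^2$ consistently through both the $\sqrt{2cvt}$ and the $ct/3$ contributions and then divide by $N=m\kappa$, so that the $r$-dependent part, the Rademacher part, and the residual additive part each acquire their stated constants. Apart from this careful constant-tracking and the uniform control of $\expectation[Z]$ and of the variance over $\Fset_r$ (the only places uniformity is invoked), the remainder is the verbatim analogue of the proof of Theorem~\ref{thm:MulticlassRedBinaryGenBound}.
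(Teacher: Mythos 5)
Your proposal is correct and follows essentially the same route as the paper: bound the centered supremum $\Phi(\underline{X},r)$ by the fractionally sub-additive process $Z$ over an exact proper fractional cover, control $\chi^*$ of the subsampled dependency graph by $K-1$ via monotonicity under graph inclusion (Lemma~\ref{lem:chromatic} plus Theorem~1 of \cite{JoshiAPRUG15}), apply the Talagrand-type inequality of Lemma~\ref{th:dependenttalagrand} with $c=25(K-1)/16$ and $v\le Nr+2\expectation[Z]$, symmetrize to get $\expectation[Z]\le N\rademacher_{T_\kappa(\TrnSet)}(\ell\circ\Fset_r)$, and optimize the free parameter $u$ with $N=m\kappa$. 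The only cosmetic difference is that you invoke Lemma~\ref{lem:chromatic} explicitly where the paper states the chromatic-number comparison inline, so there is nothing substantive to flag.
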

\begin{proof}
Consider the function $\Phi$ defined as:
\[
    \Phi(\underline{X},r)\doteq 
        N\sup_{h\in\Fset_{r}}\left[\expectation_{\underline{X}'}[\EmpRisk{T(\underline{X}')}(h)]-\EmpRisk{T(\underline{X}')}(h)\right],
\]
\noindent where $\underline{X}'$ is an i.i.d. copy of $\underline{X}$  and where we have used the notation 
$
    \expectation_{\underline{X}'}[\EmpRisk{T(\underline{X}')}(h)]$ 
for 
$
\expectation_{T(\TrnSet)}\EmpLosss{N}(h,T(\TrnSet))
$
to make explicit the dependence on the sequence of dependent variables $\underline{X}'$.
It is easy to see that:

\begin{align}
	\Phi(\underline{X},r)&\leq \sum_{k\in[K-1]} \omega_k \sup_{h\in\Fset_{r}}\sum_{\alpha\in\Cset_k}\left[\expectation_{({\tilde y}',\bfZ')}[\Ind{\tilde{y'}h(\bfZ')}]-\Ind{\tilde{y}_\alpha h(\bfZ_{\alpha})}\right]
	\notag\\
	&= Z.\label{eq:phibound}
\end{align}

\noindent Lemma~\ref{th:dependenttalagrand} readily applies to upper bound the right hand side of~\eqref{eq:phibound}.
Therefore, for $t>0$, the following holds with probability at least $1-e^{-t}$:

\begin{equation*}
	\Phi(\underline{X},r) \leq\expectation [Z]+ \sqrt{2cvt}+\frac{ct}{3},
\end{equation*}

\noindent where $c=25\chi_f/16=25(K-1)/16$ and $v\leq Nr+2\expectation [Z]$.
Using $\sqrt{a+b}\leq\sqrt{a}+\sqrt{b}$ and $2\sqrt{ab}\leq u a +b/u$ for all $u>0$, we get, 
\begin{equation*}
    \forall u>0,	\Phi(\underline{X},r) \leq(1+u)\expectation [Z]+\sqrt{2cNrt}+\left(\frac{1}{3}+\frac{1}{u}\right)ct.
\end{equation*}
Furthermore, with a simple symmetrization argument, we have,

\[
\expectation[Z]=\expectation\left[\sum_{k\in[K-1]} \omega_k \sup_{h\in\Fset_{r}}\sum_{\alpha\in\Cset_k}
\left[\expectation_{{\tilde y}',\bfZ')}[\Ind{{\tilde y}'h(\bfZ')}]-\Ind{\tilde{y}_\alpha h(\bfZ_{\alpha})}\right]\right]\leq N\rademacher(\ell\circ\Fset_r),
\]
with $\omega_k = 1$ for all $k$ since the fractional chromatic number of the dependency graph corresponds to the sample $T(\TrnSet)$ equals to $K-1$ and stands for the covering determined by Eq.~(4) with unit weights~$\omega_k$.  

Further, as $N=m\kappa$, and fractional chromatic number of $T_\kappa(\TrnSet) \leq T(\TrnSet) = K-1$ (theorem 1 of \cite{JoshiAPRUG15}),  with probability at least~$1-e^{-t}$,  we have for all $h\in\Fset_{r}$
		
\begin{multline}
\label{eq:FirstGenBound}
	R(h)-\EmpRisk{T_\kappa(\TrnSet)}(h)  \leq \\
	 \inf_{u>0}\left( (1+u)\rademacher_{T_\kappa(\TrnSet)}(\ell\circ\Fset_{r})+\frac{5}{4}\sqrt{\frac{2(K-1)rt}{m\kappa}}+
	\frac{25}{16}\left(\frac{1}{3}+\frac{1}{u}\right)\frac{(K-1)t}{\kappa m}\right).
\end{multline}

The minimum of the right hand side of the inequality  \eqref{eq:FirstGenBound} is reached for $u^*=~\frac{5}{4}\sqrt{\frac{(K-1)t}{\kappa m \rademacher_{T_\kappa(\TrnSet)}(\ell\circ\Fset_{r})}}$, plugging back the minimizer and solving for $\delta=e^{-t}$ gives the result.
\end{proof}

\bigskip

\begin{proof}[Proof of the theorem 1]
Theorem 1 of \cite{JoshiAPRUG15} states that fractional chromatic number of $T(\TrnSet)$ is bounded from above by $K-1$. Then by the lemma \ref{lem:MulticlassRedBinaryGenBound} with have with probability at least $1-\delta$~:
    \[
            R(h)\leq 
                \EmpRisk{T(\TrnSet)}(h) + \rademacher_{T(\TrnSet)}(\ell\circ\Fset_r) 
                +\frac{5}{2}\left(\sqrt{\rademacher_{T(\TrnSet)}(\ell\circ\Fset_r)}
                +\sqrt{\frac{r}{2}}\right)\sqrt{\frac{\log 1/\delta}{m}}
                +\frac{25}{48}\frac{\log 1/\delta}{m},
    \]
    entails the statement of the theorem 1. 
\end{proof}

\bigskip

\begin{customthm}{2 (a)}
\label{lem:SlowRatesAp}
 Let $\TrnSet=(\obs_i^{y_i})_{i=1}^m\in (\Input\times \Output)^m$ be a dataset of $m$ examples drawn i.i.d. according to a probability measure $\Dist = \Dist_\cY \times \Dist_{\Input | \cY}$ over $\Input\times \Output$ and $T(\TrnSet)$ the transformed set obtained with the transformation function $T$ defined in Eq.~(5). Let $\TrnSet_\pi \in (\Input\times \Output)^n$ and $T_\kappa(\TrnSet_\pi)$, $|T_\kappa(\TrnSet_\pi)| = M$ be a training sets derived from $\TrnSet$ and $T(\TrnSet)$ respectively using the algorithm $(\pi,\kappa)$-\mRb\ with parameters $\pi_1, \dots, \pi_K$ and $\kappa$. With the class of functions $\mathcal{G}=\{g: \Input \times \Output \rightarrow \mathbb{R} \}$ and $\mathcal H=\{h: h(\phi(\obs^y),\phi(\obs^{y'})) = g(\obs^{y})-g(\obs^{y'}), g\in\mathcal{G}\}$ we have the following bound on the expected risk of the classifier~:
\[
R(h) \leq 
    \alpha\EmpRisk{T_\kappa({\TrnSet}_\pi)}(h) + 
    \alpha \rademacher_{T_\kappa({\TrnSet}_\pi)}(\ell\circ \mathcal{H}) + 
    \alpha \sqrt{\frac{(K-1)\log 2/\delta}{2M\kappa}} + 
    \sqrt{\frac{2 \alpha\log 4K/\delta}{\beta(m-1)}} + 
    \frac{7\beta \log 4K/\delta}{3(m-1)}.
\]
holds with probability at least $1-\delta$, for any $\delta > 0$, the for all $h\in \mathcal \mathcal{H}$, 
$\ell:\{-1,+1\}\times\mathbb{R}\rightarrow [0,1]$ is the $0/1$ loss, and 
\[
    \alpha  = 
        \max_{y:\, 1\leq y \leq K} \eta_y/\pi_y, \quad
    \beta = 
        \max_{y:\, 1\leq y \leq K} 1/\pi_y,
\]
and $\eta_y$ is strictly positive empirical probability of the class $y$ over $\TrnSet$.
\end{customthm}
\begin{proof}
By lemma \ref{lem:main} we have for $\bar {\cal D} = \bar {\cal D}_\cY \times {\cal D}_{\Input|\cY}$, $\prob_{\bar {\cal D}_\cY} [y(\obs) = i] \propto \pi_i$, $1\le i \le K$ with probability at least $1-\delta/2$~:
\[
    R(h) \leq 
    \alpha\expectation_{\obs^y \sim \bar {\cal D}}
    \EmpRisk{T_\kappa({\TrnSet}_\pi)}(h) + 
    \sqrt{\frac{2 \alpha\log 4K/\delta}{\beta(m-1)}} + 
    \frac{7\beta \log 4K/\delta}{3(m-1)}.
\]

By theorem 4 of \cite{UsunierAG05} we have with probability at least $1-\delta/2$~:
\[
    \expectation_{\obs^y \sim \bar {\cal D}}\EmpRisk{T_\kappa(\TrnSet_\pi)}(h)
    \le 
    \EmpRisk{T_\kappa(\TrnSet_\pi)}(h) + 
    \rademacher(\mathcal{H}) + 
    \sqrt{\frac{\chi^*_{T_\kappa({\TrnSet}_\pi)}\log 2/\delta}{2M\kappa}},
\]
where a dependency graph for subsample $T_\kappa(\TrnSet_\pi)$ is a subgraph of the dependency graph for the whole sample~$T({\cal S})$. 

Then by lemma \ref{lem:chromatic} we have $\chi^*_{T_\kappa({\TrnSet}_\pi)} \leq \chi^*_{T(\TrnSet)} = K - 1$, the last is due to theorem 1 of \cite{JoshiAPRUG15}, where $\chi^*_{T(\underline{\TrnSet})}$ and $\chi^*_{T(\TrnSet)}$ stand for the fractional chromatic number of the dependency graph for $T_\kappa({\TrnSet}_\pi)$ and ${T(\TrnSet)}$ resp. 
Gather together the last two equations we prove the theorem.
\end{proof}

\bigskip

\bigskip
 
\begin{customthm}{2 (b)}
	\label{thm:FastRatesSubsetAp}
    Let $\TrnSet=(\obs_i^{y_i})_{i=1}^m\in (\Input\times \Output)^m$ be a dataset of $m$ examples drawn i.i.d. according to a probability distribution $\Dist$ over $\Input\times \Output$ and $T(\TrnSet)=((\bfZ_i,\tilde y_i))_{i=1}^N$ the transformed set obtained with the transformation function $T$ defined in Eq.~(5). 
    Let $\TrnSet_\pi \in (\Input\times \Output)^M$  and  $T_\kappa(\TrnSet_\pi)$ be a training set derived from $T(\TrnSet)$ using the algorithm $(\pi,\kappa)$-\mRb\ with parameters $\pi_1, \dots, \pi_K$ and $\kappa$.
    With the class of functions $\mathcal{G}=\{g: \Input \times \Output \rightarrow \mathbb{R} \}$ and $\mathcal H=\{h: h(\phi(\obs^y),\phi(\obs^{y'}))=g(\obs^{y})-g(\obs^{y'}), g\in\mathcal{G}\}$, consider the parameterized family $\Fset_{r}$ which, for $r>0$, is defined as~:
\[
\Fset_{r}=\{h:h\in\Fset,\mathbb{V}[h]\doteq\mathbb{V}_{\bfZ,\tilde{y}}[\Ind{\tilde{y}h(\bfZ)}]\leq r\},
\]
where $\mathbb{V}$ denotes the variance. Then for any $\delta>0$ with probability at least $(1-\delta)$ the following generalization bound holds for all $h\in \mathcal \Fset_{r}$~:
\begin{multline*}
    R(h) \leq 
        \alpha\EmpRisk{T_\kappa({\TrnSet}_\pi)}(h) + 
        \alpha \rademacher_{T_\kappa({\TrnSet}_\pi)}(\ell\circ \mathcal{H}_r) + 
        \alpha \sqrt{\frac{\log 4/\delta}{2m}} + 
        \sqrt{\frac{2 \alpha\log 4K/\delta}{\beta(m-1)}} + 
        \frac{7\beta \log 4K/\delta}{3(m-1)} 
        \\ +
        \frac{5\alpha}{2}\left(\sqrt{\rademacher_{T_\kappa({\TrnSet}_\pi)}(\ell\circ\Fset_r)} + 
        \sqrt{\frac{r}{2}}\right)\sqrt{\frac{(K-1)\log 2/\delta}{\kappa M}} + 
        \frac{25\alpha}{48}\frac{\log 2/\delta}{M},
\end{multline*}
where 
$\ell:\{-1,+1\}\times\mathbb{R}\rightarrow [0,1]$ is the $0/1$ loss and 
$\xi=(\xi_1,\ldots,\xi_N)$ a sequence of $N$ independent Rademacher variables such that $\proba(\xi_n=1)=\proba(\xi_n=-1)=1/2$,  and 
$\alpha  = \max_{y:\, 1\leq y \leq K} \eta_y/\pi_y,$ $\beta = \max_{y:\, 1\leq y \leq K} 1/\pi_y,$ and $\eta_y > 0$ is the empirical probability of the class $y$ over $\TrnSet$.
\end{customthm}

\bigskip

\begin{proof}
The proof of the theorem essentially combines the results of theorem 1 and lemma \ref{lem:main}. 

By lemma \ref{lem:main} we have with probability at least $1-\delta/2$~:
\begin{equation}
\label{eq:last1}
    R(h) \leq 
        \alpha\expectation_{\obs^y \sim \bar \Dist}\EmpRisk{T(\obs^y)}(h) + 
        \sqrt{\frac{2 \alpha\log 4K/\delta}{\beta(m-1)}} + 
        \frac{7\beta \log 4K/\delta}{3(m-1)}.
\end{equation}

Lemma \ref{lem:SlowRatesAp} applied to $T_\kappa({\TrnSet}_\pi)$, $T_\kappa({\TrnSet}_\pi) = M\kappa$ gives with probability at least $1-\delta/2$~: 
\begin{multline}
\label{eq:last2}
    \expectation_{\obs^y \sim \bar \Dist}\EmpRisk{T(\obs^y)}(h)\leq 
    \EmpRisk{T_\kappa(\TrnSet_\pi)}(h) + 
    \rademacher_{T_\kappa(\TrnSet_\pi)}(\ell\circ\Fset_r) + \\
    \frac{5}{2}\left(\sqrt{\rademacher_{T_\kappa(\TrnSet_\pi)}(\ell\circ\Fset_r)} + 
    \sqrt{\frac{r}{2}}\right)\sqrt{\frac{(K-1)\log 2/\delta}{M\kappa}} +
    \frac{25}{48}\frac{\log 2/\delta}{M}
\end{multline}

Substitution \eqref{eq:last1} in \eqref{eq:last2} gives~: 
\begin{multline*}	
	R(h)\leq 
	\alpha\EmpRisk{T_\kappa({\TrnSet}_\mu)}(h) + 
	\alpha \rademacher_{T_\kappa({\TrnSet}_\mu)}(\ell\circ \mathcal{H}_r) + 
	\sqrt{\frac{2 \alpha\log 4K/\delta}{\beta(m-1)}} +  
	\frac{7\beta \log 4K/\delta}{3(m-1)} + \\ 
	\frac{5\alpha}{2}\left(\sqrt{\rademacher_{T_\kappa({\TrnSet}_\mu)}(\ell\circ\Fset_r)} +
	\sqrt{\frac{r}{2}}\right)\sqrt{\frac{(K-1)\log 2/\delta}{M\kappa}} +
	\frac{25\alpha}{48}\frac{\log 2/\delta}{M}
\end{multline*}
\end{proof}

\begin{proof}[Proof of the theorem 2]
The statement of theorem 2 in the paper is essentially a union of the statements of theorem  \ref{lem:SlowRatesAp}  and theorem \ref{thm:FastRatesSubsetAp} proved above.
\end{proof}

\bigskip 

\section{Experimental Part}
\label{sec:app_exp}

Table \ref{tab:hyperpms} summarizes the parameters tuned for each of the methods. In our experiments, we used the solvers provided for each of the methods and tuned the most important parameters. The values shown in Table \ref{tab:hyperpms} are the final values used in the  experimental results reported in the paper, which resulted in best predictive performance in the heldout dataset.  

\begin{table}[!htb]
    \resizebox{\textwidth}{!}{
    \begin{tabular}{|c|c|c|c|c|c|c|}
        \hline
       Algorithm & Parameters  & LSHTC1 & DMOZ & WIKI-Small & WIKI-50K & WIKI-100K \\ \hline
       $\OVA$ &  C & 10  & 10 & 1 & NA & NA  \\ \hline
       $\MSVM$ &  C & 1 & 1& NA & NA & NA \\ \hline
       $\RecallTree$ & b & 30 & 30 & 30& 30 & 28 \\ \cline{2-7}
        & l   &1 &0.7 & 0.7& 0.5& 0.5\\ \cline{2-7}
        & loss\_function  & Hinge& Hinge& Logistic &Hinge &Hinge \\ \cline{2-7}
        & passes & 5 & 5& 5& 5& 5 \\ \hline
        $\fastxml$ & t & 100 & 50 & 50 & 100 &50 \\ \cline{2-7}
                   & c & 100 & 100 & 10 & 10 & 10\\ \hline
         $\pfastrexml$ & t &50  &50 & 100&200 &100 \\ \cline{2-7}
                   & c & 100 & 100 & 10 &10 &10 \\ \hline
        $\pdsparse$ &  l & 0.01 & 0.01& 0.001& 0.0001& 0.01\\ \cline{2-7}
        & Hashing & multiTrainHash & multiTrainHash & multiTrainHash & multiTrainHash & multiTrainHash\\ \hline
        \mRb & Examples per class in average$\textsuperscript{*}$& 2 & 2 & 1  & 1  & 1 \\ \cline{2-7}
         & Adversarial Classes ($\kappa$)        & 122  & 27 & 36 & 5 & 10  \\ \cline{2-7}
         & Candidate Classes  ($q$)                   & 10 & 10& 10& 10& 10\\ \hline
        \multicolumn{7}{l}{\textsuperscript{*}\footnotesize{ Here examples per class for proposed $\mRb$ method represents the average number of examples sampled per class.
        The examples are chosen at random }} \\
        \multicolumn{7}{l}{\footnotesize{from each class with probability $\pi_k$ based on the distribution.}}

    \end{tabular}
    }
    \caption{Hyper-parameters used in the final experiments}
    \label{tab:hyperpms}
    
\end{table}

\end{document}